%
%

\documentclass[10pt]{article}


\usepackage[paperwidth=7in, paperheight=10in, textwidth=5.25in, textheight=8.2in]{geometry}
\usepackage[ruled,vlined,linesnumbered]{algorithm2e}
\usepackage{subfigure}
\usepackage{amsmath}
\usepackage{amsthm}
\usepackage{amssymb}
\usepackage{natbib}
\usepackage{graphicx}
\usepackage{multirow}
\usepackage{array}
\usepackage{booktabs}
\usepackage{mathtools}
\usepackage{enumitem}
\usepackage{float}
\usepackage{nicefrac}
\usepackage{forloop}
\usepackage[hidelinks]{hyperref}
\usepackage{url}
\usepackage{authblk}
\usepackage{environ}

\newtheorem{theorem}{Theorem}

\newtheorem{observation}[theorem]{Observation}

\newtheorem{lemma}[theorem]{Lemma}
\newtheorem{corollary}[theorem]{Corollary}
\newtheorem{reduction}[theorem]{Reduction}
\newtheorem{proposition}[theorem]{Proposition}
\makeatletter
\newcommand{\newreptheorem}[2]{%
	\newenvironment{rep#1}[1]{%
		\expandafter\renewcommand\csname the#2\endcsname{\ref*{##1}}%
		\expandafter\renewcommand\csname theH#2\endcsname{repeat.##1}%
		\begin{#1}}%
		{\end{#1}%
		\addtocounter{#2}{-1}}}
\makeatother

\newreptheorem{theorem}{theorem}
\newreptheorem{observation}{theorem}
\newreptheorem{lemma}{theorem}
\newreptheorem{proposition}{theorem}

\newcommand{\eps}{\varepsilon}
\newcommand{\ie}{{i.e.}}
\newcommand{\ExchangeAlg}{{\textsc{Exchange-Candidate}}}
\newcommand{\characteristic}{{\mathbf{1}}}
\newcommand{\nnR}{{\bR_{\geq 0}}}
\newcommand{\AlgSeq}{\textsc{seqDPP}\xspace}
\newcommand{\AlgLocal}{\textsc{Local-Search}\xspace}
\newcommand{\AlgSampling}{\textsc{Sample-Streaming}\xspace}
\newcommand{\euler}{e}

\newcommand{\defcal}[1]{\expandafter\newcommand\csname c#1\endcsname{{\mathcal{#1}}}}
\newcommand{\defbb}[1]{\expandafter\newcommand\csname b#1\endcsname{{\mathbb{#1}}}}
\newcounter{calBbCounter}
\forLoop{1}{26}{calBbCounter}{
	\edef\letter{\Alph{calBbCounter}}
	\expandafter\defcal\letter
	\expandafter\defbb\letter
}

\title{Do Less, Get More: Streaming Submodular Maximization with Subsampling}
\usepackage{times}

\author[1]{Moran Feldman}
\author[2]{Amin Karbasi}
\author[2]{Ehsan Kazemi}
\affil[1]{Department of Mathematics and Computer Science, Open University of Israel}
\affil[2]{Yale Institute for Network Science\\Yale University}
\date{}

\begin{document}

\maketitle
\begin{abstract}
In this paper, we develop the first one-pass streaming algorithm for submodular maximization  that does not evaluate the entire stream even once. By carefully subsampling each element of data stream, our algorithm enjoys the tightest approximation guarantees in various settings while having the smallest memory footprint and requiring the lowest number of function evaluations. More specifically, for a monotone submodular function and a $p$-matchoid constraint, our randomized algorithm achieves a $4p$ approximation ratio (in expectation) with $O(k)$ memory and $O(km/p)$ queries per element ($k$ is the size of the largest feasible solution and $m$ is the number of matroids used to define the constraint). For the non-monotone case, our approximation ratio increases only slightly to $4p+2-o(1)$.  To the best or our knowledge, our algorithm is the first that combines the benefits of streaming and subsampling in a novel way in order to truly scale submodular maximization to massive machine learning problems. To showcase its practicality, we empirically evaluated the performance of our algorithm on a video summarization application and observed that it outperforms the state-of-the-art algorithm by up to fifty fold, while maintaining practically the same utility.
	
	\medskip
	\noindent \textbf{Keywords}: Submodular maximization, streaming, subsampling, data summarization, $p$-matchoids
\end{abstract}

\thispagestyle{empty}
\pagenumbering{Alph}
\newpage 
\pagenumbering{arabic}

\section{Introduction}
Submodularity  characterizes a  wide variety of discrete optimization problems that naturally occur in machine learning and artificial intelligence \citep{BB17}. Of particular interest is submodular maximization, which captures many novel instances of data summarization such as active set selection in non-parametric learning~\citep{MKSK16}, image summarization~\citep{TIWB14}, corpus summarization~\citep{LB11}, fMRI  parcellation \citep{SKSC17}, and removing redundant elements from DNA sequencing \citep{LBN18}, to name a few.  
%
%

Often the collection of elements to be summarized is generated continuously, and it is important to maintain at real time a summary of the part of the collection generated so far. For example, a surveillance camera generates a continuous stream of frames, and it is desirable to be able to quickly get at every given time point a short summary of the frames taken so far. The na\"{i}ve way to handle such a data summarization task is to store the entire set of generated elements, and then, upon request, use an appropriate offline submodular maximization algorithm to generate a summary out of the stored set. Unfortunately, this approach is usually not practical both because it requires the system to store the entire generated set of elements and because the generation of the summary from such a large amount of data can be very slow. These issues have motivated previous works to use streaming submodular maximization algorithms for data summarization tasks~\citep{GK10, BMKK14,MKK17}.

The first works (we are aware of) to consider a one-pass streaming algorithm for submodular maximization problems were the work of~\citet{BMKK14}, who described a $1/2$-approximation streaming algorithm for maximizing a monotone submodular function subject to a cardinality constraint, and the work of \citet{CK15} who gave a $4p$-approximation streming algorithm for maximizing such functions subject to the intersection of $p$ matroid constraints. The last result was later extended by~\citet{CGQ15} to $p$-matchoids constraints. For non-monotone submodular objectives, the first streaming result was obtained by \citet{BFS15}, who described a randomized streaming algorithm achieving $11.197$-approximation for the problem of maximizing a non-monotone submodular function subject to a single cardinality constraint. Then, \citet{CGQ15} described an algorithm of the same kind achieving $(5p + 2 + 1/p)/(1-\eps)$-approximation for the problem of maximizing a non-monotone submodular function subject to a $p$-matchoid constraint, and a deterministic streaming algorithm achieving $(9p + O(\sqrt{p}))/(1-\eps)$-approximation for the same problem.\footnote{The algorithms of~\cite{CGQ15} use an offline algorithm for the same problem in a black box fashion, and their approximation ratios depend on the offline algorithm used. The approximation ratios stated here assume the state-of-the-art offline algorithms of~\citep{FHK17} which were published only recently, and thus, they are better than the approximation ratios stated by~\cite{CGQ15}.} Finally, very recently, \citet{MJK17} came up with a different deterministic algorithm for the same problem achieving an approximation ratio of $4p + 4\sqrt{p} + 1$.


In the field of submodular optimization, it is customary to assume that the algorithm has access to the objective function and constraint through oracles. In particular, all the above algorithms assume access to a value oracle that given a set $S$ returns the value of the objective function for this set, and to an independence oracle that given a set $S$ and an input matroid answers whether $S$ is feasible or not in that matroid. 
%
%
%
 Given access to these oracles, the algorithms of~\citet{CK15} and~\citet{CGQ15} for monotone submodular objective functions are quite efficient, requiring only $O(k)$ memory ($k$ is the size of the largest feasible set) and using only $O(km)$ value and independence oracle queries for processing a single element of the stream ($m$ is a the number of matroids used to define the $p$-matchoid constraint). However, the algorithms developed for non-monotone submodular objectives are much less efficient (see Table~\ref{tbl:summary} for their exact parameters).

\begin{table}
\caption{Streaming algorithms for submodular maximization subject to a $p$-matchoid constraint.} \label{tbl:summary}
\begin{center}
\begin{tabular}{m{1.52cm}m{1.75cm}>{\centering\arraybackslash}m{1.5cm}>{\centering\arraybackslash}m{1.51cm}>{\centering\arraybackslash}m{1.76cm}m{2.4cm}}
\toprule
\textbf{\footnotesize Kind of} & \textbf{\footnotesize Objective} & \textbf{\footnotesize Approx.} & \textbf{\footnotesize Memory} & \textbf{\footnotesize Queries per} & \textbf{\footnotesize Reference}\\
\textbf{\footnotesize Algorithm} & \textbf{\footnotesize Function} & \textbf{\footnotesize Ratio} && \textbf{\footnotesize Element} & \\
\hline
{\footnotesize Deterministic} & {\footnotesize Monotone} & {\footnotesize $4p$} & {\footnotesize $O(k)$} & {\footnotesize $O(km)$} & {\footnotesize \citealp{CGQ15}}\\
{\footnotesize Randomized} & {\footnotesize Non-monotone} & {\footnotesize $\frac{5p + 2 + 1/p}{{1-\eps}}$} & {\footnotesize $O(\frac{k}{\eps^2} \log \frac{k}{\eps})$} & {\footnotesize $O(\frac{k^2m}{\eps^2} \log \frac{k}{\eps})$} & {\footnotesize \citealp{CGQ15}}\\
{\footnotesize Deterministic} & {\footnotesize Non-monotone} & {\footnotesize $\frac{9p + O(\sqrt{p})}{1-\eps}$} & {\footnotesize $O(\frac{k}{\eps}\log\frac{k}{\eps})$} & {\footnotesize $O(\frac{km}{\eps}\log \frac{k}{\eps})$} & {\footnotesize \citealp{CGQ15}}\\
{\footnotesize Deterministic} & {\footnotesize Non-monotone} & {\footnotesize $4p + 4\sqrt{p} + 1$} & {\footnotesize $O(k\sqrt{p})$} & {\footnotesize $O(\sqrt{p}km)$} & {\footnotesize \citealp{MJK17}\footnotemark}\\
\hline
{\footnotesize Randomized} & {\footnotesize Monotone} & {\footnotesize $4p$} & {\footnotesize $O(k)$} & {\footnotesize $O(km/p)$} & {\footnotesize This paper}\\
{\footnotesize Randomized} & {\footnotesize Non-monotone} & {\footnotesize $4p + 2 - o(1)$} & {\footnotesize $O(k)$} & {\footnotesize $O(km/p)$} & {\footnotesize This paper}\\
	\bottomrule
\end{tabular}
\end{center}
\end{table}
\footnotetext{The memory and query complexities of the algorithm of~\citet{MJK17} have been calculated based on the corresponding complexities of the algorithm of~\citep{CGQ15} for monotone objectives and the properties of the reduction used by~\citep{MJK17}. We note that these complexities do not match the memory and query complexities stated by~\citep{MJK17} for their algorithm.}

In this paper, we describe a new randomized streaming algorithm for maximizing a submodular function subject to a $p$-matchoid constraint. Our algorithm obtains an improved approximation ratio of $2p + 2\sqrt{p(p + 1)} + 1 = 4p + 2 - o(1)$, while using only $O(k)$ memory and $O(km/p)$ value and independence oracle queries (in expectation) per element of the stream, which is even less than the number of oracle queries used by the state-of-the-art algorithm for monotone submodular objectives. Moreover, when the objective function is monotone, our algorithm (with slightly different parameter values) achieves an improved approximation ratio of $4p$ using the same memory and oracle query complexities, \ie, it matches the state-of-the-art algorithm for monotone objectives in terms of the approximation ratio, while improving over it in terms of the number of value and independence oracle queries used. Additionally, we would like to point out that our algorithm also works in the online model with preemption suggested by~\citet{BFS15} for submodular maximization problems. Thus, our result for non-monotone submodular objectives represents the first non-trivial result in this model for such objectives for any constraint other than a single matroid constraint. For a single matroid constraint, an approximation ratio of $16$ (which improves to $8.734$ for cardinality constraints) was given by~\citet{CHJKT17}, and our algorithm improves it to $3 + 2\sqrt{2} \approx 5.828$ since a single matroid is equivalent to $1$-matchoid.

In addition to mathematically analyzing our algorithm, we also studied its practical performance in a video summarization task. 
We observed that, while our algorithm preserves the quality of the produced summaries, it outperforms the running time of the state-of-the-art algorithm by an order of magnitude. We also studied the effect of imposing different $p$-matchoid constraints on the video summarization.


\subsection{Additional Related Work}

The work on (offline) maximizing a monotone submodular function subject to a matroid constraint goes back to the classical result of \citet{FNW78}, who showed that the natural greedy algorithm gives an approximation ratio of $2$ for this problem. Later, an algorithm with an improved approximation ratio of $\euler / (\euler - 1)$ was found for this problem~\citep{CCPV11}, which is the best that can be done in polynomial time~\citep{NW78}. In contrast, the corresponding optimization problem for non-monotone submodular objectives is much less well understood.  After a long series of works~\citep{LMNS10,V13,GV11,FNS11,EN16}, the current best approximation ratio for this problem is $2.598$ \citep{BF16}, which is still far from the state-of-the-art inapproximability result of $2.093$ for this problem due to~\citep{GV11}.  

Several works have considered (offline) maximization of both monotone and non-monotone submodular functions subject to constraint families generalizing matroid constraints, including intersection of $p$-matroid constraints~\citep{LSV10}, $p$-exchange system constraints~\citep{FNSW11,W12}, $p$-extendible system constraints~\citep{FHK17} and $p$-systems constraints~\citep{FNW78,GRST10,MBK16,FHK17}. We note that the first of these families is a subset of the $p$-matchoid constraints studied by the current work, while the last two families generalize $p$-matchoid constraints. Moreover, the state-of-the-art approximation ratios for all these families of constraints are $p \pm O(\sqrt{p})$ both for monotone and non-monotone submodular objectives.

The study of submodular maximization in the streaming setting has been mostly surveyed above. However, we would like to note that besides the above mentioned results, there are also a few works on submodular maximization in the sliding window variant of the streaming setting~\citep{CNZ16,ELVZ17,WLT17}.

\subsection{Our Technique}

Technically, our algorithm is equivalent to dismissing every element of the stream with an appropriate probability, and then feeding the elements that have not been dismissed into the deterministic algorithm of~\citep{CGQ15} for maximizing a monotone submodular function subject to a $p$-matchoid constraint. The random dismissal of elements gives the algorithm two advantages. First, it makes it faster because there is no need to process the dismissed elements. Second, it is well known that such a dismissal often transforms an algorithm for monotone submodular objectives into an algorithm with some approximation guarantee also for non-monotone objectives. However, beside the above important advantages, dismissing elements at random also have an obvious drawback, namely, the dismissed elements are likely to include a significant fraction of the value of the optimal solution. The crux of the analysis of our algorithm is its ability to show that the above mentioned loss of value due to the random dismissal of elements does not affect the approximation ratio. To do so, we prove a stronger version of a structural lemma regarding graphs and matroids that was implicitly proved by~\citep{V11} and later stated explicitly by~\citep{CGQ15}. The stronger version we prove translates into an improvement in the bound on the performance of the algorithm, which is not sufficient to improve the guaranteed approximation ratio, but fortunately, is good enough to counterbalance the loss due to the random dismissal of elements.

We would like to note that the general technique of dismissing elements at random, and then running an algorithm for monotone submodular objectives on the remaining elements, was previously used by~\citep{FHK17} in the context of offline algorithms. However, the method we use in this work to counterbalance the loss of value due to the random dismissal of \textit{streaming} elements is completely unrelated to the way this was achieved in~\citep{FHK17}.

\section{Preliminaries}

In this section, we introduce some notation and definitions that we later use to formally state our results. A set function $f\colon 2^\cN \to \bR$ on a ground set $\cN$ is \emph{non-negative} if $f(S) \geq 0$ for every $S \subseteq \cN$, \emph{monotone} if $f(S) \leq f(T)$ for every $S \subseteq T \subseteq \cN$ and \emph{submodular} if $f(S) + f(T) \geq f(S \cup T) + f(S \cap T)$ for every $S,T \subseteq \cN$. Intuitively, a submodular function is a function that obeys the property of diminishing returns, \ie, the marginal contribution of adding an element to a set diminishes as the set becomes larger and larger. Unfortunately, it is somewhat difficult to relate this intuition to the above (quite cryptic) definition of submodularity, and therefore, a more friendly equivalent definition of submodularity is often used. However, to present this equivalent definition in a simple form, we need some notation. Given a set $S$ and an element $u$, we denote by $S + u$ and $S - u$ the union $S \cup \{u\}$ and the expression $S \setminus \{u\}$, respectively. Additionally, the marginal contribution of $u$ to the set $S$ under the set function $f$ is written as $f(u \mid S) \triangleq f(S + u) - f(S)$. Using this notation, we can now state the above mentioned equivalent definition of submodularity, which is that a set function $f$ is submodular if and only if
\[
	f(u \mid S) \geq f(u \mid T) \quad \forall\; S \subseteq T \subseteq \cN \text{ and } u \in \cN \setminus T
	\enspace.
\]
Occasionally, we also refer to the marginal contribution of a set $T$ to a set $S$ (under a set function $f$), which we write as $f(T \mid S) \triangleq f(S \cup T) - f(S)$.

A \emph{set system} is a pair $(\cN, \cI)$, where $\cN$ is the ground set of the set system and $\cI \subseteq 2^\cN$ is the set of \emph{independent} sets of the set system. A \emph{matroid} is a set system which obeys three properties: (i) the empty set is independent, (ii) if $S \subseteq T \subseteq \cN$ and $T$ is independent, then so is $S$, and finally, (iii) if $S$ and $T$ are two independent sets obeying $|S| < |T|$, then there exists an element $u \in T \setminus S$ such that $S + u$ is independent. In the following lines we define two matroid related terms that we use often in our proofs, however, readers who are not familiar with matroid theory should consider reading a more extensive presentation of matroids, such as the one given by~\citep[Volume B]{S03}. A \emph{cycle} of a matroid is an inclusion-wise minimal dependent set, and an element $u$ is \emph{spanned} by a set $S$ if the maximum size independent subsets of $S$ and $S+u$ are of the same size. Note that it follows from these definitions that every element $u$ of a cycle $C$ is spanned by $C - u$.

A set system $(\cN, \cI)$ is a \emph{$p$-matchoid}, for some positive integer $p$, if there exist $m$ matroids $(\cN_1, \cI_1), (\cN_2, \cI_2), \dotsc, (\cN_m, \cI_m)$ such that every element of $\cN$ appears in the ground set of at most $p$ out of these matroids and $\cI = \{S \subseteq 2^\cN \mid \forall_{1 \leq i \leq m}\; S \cap \cN_i \in \cI_i\}$. A simple example for a $2$-matchoid is $b$-matching. Recall that a set $E$ of edges of a graph is a $b$-matching if and only if every vertex $v$ of the graph is hit by at most $b(v)$ edges of $E$, where $b$ is a function assigning integer values to the vertices. The corresponding $2$-matchoid $\cM$ has the set of edges of the graph as its ground set and a matroid for every vertex of the graph, where the matroid $\cM_v$ of a vertex $v$ of the graph has in its ground set only the edges hitting $v$ and a set $E$ of edges is independent in $\cM_v$ if and only if $|E| \leq b(v)$. Since every edge hits only two vertices, it appears in the ground sets of only two vertex matroids, and thus, $\cM$ is indeed a $2$-matchoid. Moreover, one can verify that a set of edges is independent in $\cM$ if and only if it is a valid $b$-matching.

The problem of maximizing a set function $f\colon 2^\cN \to \bR$ subject to a $p$-matchoid constraint $\cM = (\cN, \cI)$ asks us to find an independent set $S \in \cI$ maximizing $f(S)$. In the streaming setting we assume that the elements of $\cN$ arrive sequentially in some adversarially chosen order, and the algorithm learns about each element only when it arrives. The objective of an algorithm in this setting is to maintain a set $S \in \cI$ which approximately maximizes $f$, and to do so with as little memory as possible. In particular, we are interested in algorithms whose memory requirement does not depend on the size of the ground set $\cN$, which means that they cannot keep in their memory all the elements that have arrived so far. Our two results for this setting are given by the following theorems. Recall that $k$ is the size of the largest independent set and $m$ is the number of matroids used to define the $p$-matchoid constraint.

\begin{theorem} \label{thm:monotone}
There is a streaming $4p$-approximation algorithm for the problem of maximizing a non-negative \emph{monotone} submodular function $f$ subject to a $p$-matchoid constraint whose space complexity is $O(k)$. Moreover, in expectation, this algorithm uses $O(km/p)$ value and independence oracle queries when processing each arriving element.
\end{theorem}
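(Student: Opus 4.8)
\medskip
\noindent\textbf{Proof plan.} Our algorithm maintains a feasible set $S \in \cI$ together with a weight $w(v) \in \nnR$ for each $v \in S$. When an element $u$ arrives, the algorithm first \emph{dismisses} it (and ignores it) independently with probability $1-q$; only if $u$ survives does the algorithm perform an exchange step in the spirit of the deterministic algorithm of~\citet{CGQ15}: it computes a minimum-weight set $C_u \subseteq S$ for which $(S \setminus C_u) \cup \{u\} \in \cI$, and if $f\big(u \mid S \setminus C_u\big) \ge (1+c)\sum_{v \in C_u} w(v)$ it replaces $S$ by $(S \setminus C_u) \cup \{u\}$ and sets $w(u) := f(u \mid S \setminus C_u)$; otherwise nothing changes. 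Here $q \in (0,1]$ and the threshold slack $c > 0$ are parameters, which we will eventually fix to $q = 1/p$ and to the value of $c$ that optimizes the ratio. The space bound is immediate: $S$ is always independent in the $p$-matchoid, so $|S| \le k$ and $S$ with its weights fits in $O(k)$ memory. For the query bound, a dismissed element costs $O(1)$, whereas a surviving element costs the same $O(km)$ value and independence queries as a single step of~\citet{CGQ15} (to find $C_u$ and to test the acceptance rule); since an element survives with probability $q = 1/p$, the expected number of queries spent on an arriving element is $O(km/p)$.

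For the approximation ratio, fix an optimal feasible solution $O$, let $R$ be the (random) set of elements that survive subsampling, let $S$ be the final solution and $A \supseteq S$ the set of all elements ever inserted. I would first record the two deterministic facts that hold on every realization of the coins, exactly as in~\citet{CGQ15}. First, organizing the evictions into a forest whose roots are the elements of $S$ and summing the geometric series produced by the factor $1+c$ in the acceptance rule gives $\sum_{v \in A} w(v) \le \tfrac{1+c}{c}\, f(S)$ (also using that $\sum_{v \in S} w(v) \le f(S)$, which follows from submodularity). Second, monotonicity and submodularity give $f(O) \le f(O \cup S) \le f(S) + \sum_{o \in O \setminus S} f(o \mid S)$, so the whole task reduces to bounding $\sum_{o \in O \setminus S} f(o \mid S)$. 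Since $S \subseteq R$, this sum splits into a contribution from optimal elements that \emph{survived} subsampling but ended up outside $S$ and a contribution from the optimal elements of $O \setminus R$ that were \emph{dismissed}.

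The first contribution is the routine part: for an optimal element that was offered and then rejected, or inserted and later evicted, the threshold it failed (or the weight it was evicted against), together with submodularity and the structural exchange lemma for $p$-matchoids --- which assigns the relevant optimal elements to elements of $A$ with each element of $A$ charged by at most $p$ of them --- turns its marginal contribution into an $O(1/c)$ multiple of weights in $A$, hence, via the first deterministic inequality, into a $\Theta(p/c)$ multiple of $f(S)$. The dismissed elements are the crux. The naive route --- comparing $S$ with the best feasible subset of the surviving stream $R$, whose expected value is at least $q\, f(O)$ by concavity of the multilinear extension of $f$ in the direction $\mathbf{1}_O$ --- is hopeless here, as it only yields $\bE[f(S)] \ge \tfrac{q}{4p}\, f(O)$, which equals $\tfrac{1}{4p^2}\, f(O)$ for $q = 1/p$: a full factor of $p$ short of the target. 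Instead I would bound $\sum_{o \in O \setminus R} f(o \mid S)$ directly, invoking a \emph{strengthened} version of the graph-and-matroid lemma implicit in~\citet{V11} and made explicit by~\citet{CGQ15} to assign the dismissed optimal elements to elements of $A$ as well (with an improved constant), and then taking expectations over the dismissal coins: because each element of $O$ survives independently with probability $q$, the part of the bound attributable to dismissed elements effectively shrinks, and the improved constant of the strengthened lemma is exactly what is needed so that, after adding this residual term, the overall bound does not exceed what the ordinary lemma already gives in the no-dismissal case. Collecting $\sum_{v \in A} w(v) \le \tfrac{1+c}{c} f(S)$, the structural lemma, and this probabilistic estimate, and optimizing $c$ with $q = 1/p$, then yields $\bE[f(S)] \ge \tfrac{1}{4p}\, f(O)$.

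I expect the main obstacle to be the combination of two points, both centered on the structural lemma. One is proving the \emph{strengthened} $p$-matchoid exchange lemma itself: the ordinary \citet{V11}/\citet{CGQ15} lemma only suffices in the no-dismissal setting, and with $q = 1/p$ the value lost to dismissed optimal elements can be absorbed only with the sharper constant. The other is making the per-realization charging supplied by the lemma survive taking expectations, given that $S$, $A$, and the weights $w$ are all random and statistically coupled to which optimal elements were dismissed; arranging the conditioning so that the averaging over the dismissal coins layers cleanly on top of the deterministic bookkeeping is the delicate step --- and it is essentially the same mechanism that, with a different choice of $q$ and an extra correction for the non-monotone value loss, will give the companion $4p + 2 - o(1)$ guarantee.
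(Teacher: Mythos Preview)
Your high-level picture (subsample, then run the exchange-based streaming algorithm; use a strengthened structural lemma to absorb the loss from dismissed optimal elements) is right in spirit, but the concrete mechanism you sketch has a genuine gap, and your choice $q=1/p$ is not the value that makes the argument close.

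The central problem is your plan to ``assign the dismissed optimal elements to elements of $A$'' via the structural lemma. For an optimal element $u_i$ that is dismissed (call this set $R$, as the paper does), one of two things is true. Either $u_i$ would have failed the threshold anyway, in which case it behaves like a rejected element and can indeed be charged; or $u_i$ would have \emph{passed} the threshold, meaning $f(u_i\mid S_{i-1})\ge (1+c)\cdot f(U_i:S_{i-1})$---and now the inequality points the wrong way, so you cannot upper bound $f(u_i\mid S_{i-1})$ by the weights of its exchange set, and the structural lemma gives you nothing for it. These are exactly the elements you need to handle, so ``charge $R$ to $A$ through the exchange graph'' does not work.

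What the paper does instead is the opposite maneuver. It applies the strengthened structural lemma only to $T=OPT\setminus R$, and the crucial strengthening is not merely the $p-1$ (versus $p$) count on $A\setminus S_n$, but a second clause: every $u_i\in T\cap A$ is mapped to \emph{exactly} $p$ elements of $A$, each with value at least $f(u_i\mid S_{i-1})$. This produces a \emph{negative} term $-(1+c)p\sum_{u_i\in OPT\cap A} f(u_i\mid S_{i-1})$ in the per-realization bound on $f(A\cup OPT)$. The dismissed optimal elements contribute the \emph{positive} term $\sum_{u_i\in OPT\cap R} f(u_i\mid S_{i-1})$. The coupling step then comes from reordering the algorithm for analysis so that the threshold is tested \emph{before} the coin is flipped: conditioned on the history (hence on $S_{i-1}$), an optimal element that passes the threshold lands in $A$ with probability $q$ and in $R$ with probability $1-q$, so
\[
\bE\!\left[\sum_{u_i\in OPT\cap R} f(u_i\mid S_{i-1})\right]
=\frac{1-q}{q}\cdot \bE\!\left[\sum_{u_i\in OPT\cap A} f(u_i\mid S_{i-1})\right].
\]
The two terms cancel \emph{exactly} when $(1-q)/q=(1+c)p$, i.e.\ $q^{-1}=(1+c)p+1$; for $c=1$ this gives $q=1/(2p+1)$, not $1/p$, and the remaining bound is $\frac{(1+c)^2p}{c}=4p$. (The query complexity is still $O(km/p)$ since $q=\Theta(1/p)$.) So the missing idea is: do not try to charge $R$ to $A$; instead, extract a \emph{credit} from $OPT\cap A$ via the second strengthening, and choose $q$ so this credit pays for $OPT\cap R$ in expectation.
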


\newcommand{\thmNonMonotone}{%
There is a streaming $(2p + 2\sqrt{p(p+1} + 1)$-approximation algorithm for the problem of maximizing a non-negative submodular function $f$ subject to a $p$-matchoid constraint whose space complexity is $O(k)$. Moreover, in expectation, this algorithm uses $O(km/p)$ value and indpenence oracle queries when processing each arriving element.}
\begin{theorem} \label{thm:non_monotone}
\thmNonMonotone
\end{theorem}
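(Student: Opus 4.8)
The plan is to analyze the algorithm informally described in the introduction. Fix a subsampling probability $q\in(0,1]$; when an element $u$ of the stream arrives, keep it with probability $q$ and discard it with probability $1-q$, independently across elements, and feed the kept elements one by one into the deterministic streaming algorithm of~\citet{CGQ15} for maximizing a monotone submodular function subject to a $p$-matchoid constraint, with its swap-threshold parameter $\gamma>0$ left as a free constant; the output is the feasible set that inner algorithm maintains. The space bound is immediate, since the inner algorithm never stores more than $O(k)$ elements and subsampling does not change this. For the query bound, the inner algorithm uses $O(km)$ value and independence queries per \emph{processed} element, and a stream element is processed only when it is kept, so the expected number of queries charged to it is $O(qkm)$; the approximation analysis will force $q=\Theta(1/p)$, which gives the claimed $O(km/p)$ expected bound (the same subsampling device also supplies the query bound of Theorem~\ref{thm:monotone}).

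For the approximation ratio I would stay inside the accounting framework of~\citet{CGQ15}. Let $A$ be the set of all elements ever inserted into the maintained solution; for $u\in A$ write $w(u)=f(u\mid S_u)$, where $S_u$ is the solution just before $u$ is inserted; and let $R\subseteq A$ be the elements eventually removed, so that $S=A\setminus R$ is the output. The rule ``insert $u$, evicting the exchange-candidate set $C_u$ returned by \ExchangeAlg, iff $w(u)\ge(1+\gamma)\,w(C_u)$'' forces $w(u)\ge 0$ for every $u\in A$ and, summing over insertions while noting that each removed element lies in exactly one $C_u$, forces $w(R)\le\frac{1}{1+\gamma}w(A)$ and hence $w(S)\ge\frac{\gamma}{1+\gamma}w(A)$. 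Since any element of $S$ inserted before $u$ is still present when $u$ is inserted, submodularity gives $f(S)\ge\sum_{u\in S}w(u)=w(S)$. This part concerns only the kept elements and is untouched by the subsampling.

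The crux is bounding $f(O)$ for a fixed optimal solution $O$. For an $o\in O$ that survives the subsampling, at the moment $o$ is processed it is either inserted (so $o\in A$) or rejected, and in the latter case $f(o\mid S_o)<(1+\gamma)\,w(C_o)$ with $C_o\subseteq S_o\subseteq A$. A standard telescoping-plus-submodularity argument then bounds the part of the optimum not already captured by the history in terms of $(1+\gamma)\sum_o w(C_o)$, summed over the surviving, rejected $o$'s, so the problem reduces to controlling how often (weighted by $w$) a given $u\in A$ appears among the sets $C_o$. That count is exactly what the structural lemma about graphs and matroids implicit in~\citet{V11} and made explicit in~\citet{CGQ15} bounds, and it is here that I would prove and apply a strengthened version of it: plugging in the lemma as it stands reproduces, together with $f(S)\ge w(S)\ge\frac{\gamma}{1+\gamma}w(A)$, only an $O(p)$-approximation in the no-subsampling regime, whereas the strengthened version shaves off precisely the extra slack the subsampling will consume. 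Combining this with the non-negativity-based sampling step — since each $o\in O$ survives independently with probability $q$, the relevant expectations can be related back to $f(O)$ in the spirit of~\citet{BFS15} and~\citet{FHK17} — yields an inequality $f(O)\le c(p,q,\gamma)\cdot\bE[f(S)]$; optimizing over the two free parameters then gives $c(p,q,\gamma)=2p+2\sqrt{p(p+1)}+1$, with the optimal $q$ of order $1/p$.

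Two steps carry the weight. The first, flagged by the authors, is the strengthened structural lemma: the bound extractable from~\citet{V11} and~\citet{CGQ15} is not quite enough, and one must push out an improvement that is simultaneously (i) large enough to pay for the $\Omega(q\cdot f(O))$ of optimal value that the subsampling throws away and (ii) consistent with the temporal, exchange-driven process by which $A$ and the sets $C_o$ are generated. The second, more pedestrian but still delicate, is the randomness bookkeeping: $A$, $S$, and all the sets $C_o$ are deterministic functions of \emph{which} elements survive, so the charging argument and the sampling step must be carried out under a careful conditioning — revealing the subsampling coins in stream order and reasoning about conditional expectations — so that the expectation over the subsampling can be taken without circularity.
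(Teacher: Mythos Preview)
Your high-level plan matches the paper: subsample with probability $q$, feed survivors into the~\citet{CGQ15} exchange algorithm with threshold parameter (the paper calls it $c$, your $\gamma$), and optimize the two parameters jointly. Your space and query bookkeeping is correct, and the inequalities $f(S)\ge w(S)$ and $w(R)\le\frac{1}{1+\gamma}w(A)$ are fine. But the two steps you flag as ``carrying the weight'' are both left as promissory notes, and for the second one your sketch actually points in the wrong direction.

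The sentence ``For an $o\in O$ that survives the subsampling\dots'' is the problem. If you only charge surviving optimal elements, you obtain (after the structural lemma) a bound of the form $f(S)\ge \text{const}\cdot f\bigl(S\cup(O\cap\text{survivors})\bigr)$, and for non-monotone $f$ there is no clean way to get from there back to $f(O)$: the elements of $O$ that were discarded by the coin are simply never seen by the inner algorithm, so you have no handle on their marginals. The sampling lemmas you cite (BFNS14, FHK17) give $\bE[f(B\cup O)]\ge(1-q)f(O)$ for a random $B$ that contains each element with probability at most $q$; they do \emph{not} let you replace $O$ by $O\cap\text{survivors}$ inside the union. The paper resolves this with a device you do not mention: it analyzes an equivalent algorithm (Algorithm~\ref{alg:analysis}) that performs the acceptance test \emph{before} flipping the coin, so that every element of $O$---survivor or not---receives a marginal bound at arrival time. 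Elements of $O$ then fall into three buckets (never passed the test; passed and coin said yes, landing in $A$; passed and coin said no, landing in a set $R$), and the choice $q^{-1}=(1+c)p+1$ is made precisely so that, in expectation, the $O\cap R$ contribution cancels the $O\cap A$ contribution. This balancing is what produces the clean inequality $\bE[f(S_n)]\ge\frac{c}{(1+c)^2p}\,\bE[f(A\cup O)]$, after which the BFNS14 lemma is applied to $A$ (each element lies in $A$ with probability at most $q$) to get $\bE[f(A\cup O)]\ge(1-q)f(O)$. Plugging in $c=\sqrt{1+1/p}$ then yields $2p+2\sqrt{p(p+1)}+1$.

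On the strengthened structural lemma: you correctly locate it as the other load-bearing step, but your write-up gives no indication of \emph{what} the strengthening is. In the paper it is the second bullet of Proposition~\ref{prop:mapping}: elements of $A\setminus S_n$ are hit at most $p-1$ times (not $p$) by the charging, because each such element has already been evicted once and hence is a non-sink in one of the auxiliary exchange graphs $G_\ell$. Without naming this refinement you cannot justify the constant $\frac{(1+c)p-1}{c}$ in place of $\frac{(1+c)p}{c}$, and without that saving the final optimization does not close at the stated ratio.
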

\section{Algorithm} \label{sec:algorithm}

In this section we prove Theorems~\ref{thm:monotone} and~\ref{thm:non_monotone}. Throughout this section we assume that $f$ is a non-negative submodular function over the ground set $\cN$, and $\cM = (\cN, \cI)$ is a $p$-matchoid over the same ground set which is defined by the matroids $(\cN_1, \cI_1), (\cN_2, \cI_2), \dotsc, (\cN_m, \cI_m)$. Additionally, we denote by $u_1, u_2, \dotsc, u_n$ the elements of $\cN$ in the order in which they arrive. Finally, for an element $u_i \in N$ and sets $S, T \subseteq \cN$, we use the shorthands $f(u_i : S) = f(u_i \mid S \cap \{u_1, u_2, \dotsc, u_{i-1}\})$ and $f(T : S) = \sum_{u \in T} f(u : S)$. Intuitively, $f(u : S)$ is the marginal contribution of $u$ to the part of $S$ that arrived before $u$ itself. One useful property of this shorthand is given by the following observation. 

\newcommand{\obsTechnical}{For every two sets $S, T \subseteq \cN$, $f(T \mid S \setminus T) \leq f(T : S)$.}
\begin{observation} \label{obs:technical}
\obsTechnical
\end{observation}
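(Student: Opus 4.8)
The plan is to expand $f(T \mid S \setminus T)$ as a telescoping sum of marginal contributions, taken in the order in which the elements of $T$ arrive, and then bound each individual term using submodularity. The crucial point will be that processing the elements of $T$ in arrival order (rather than in an arbitrary order) is exactly what makes the needed set inclusions hold.

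Concretely, I would write $T = \{u_{i_1}, u_{i_2}, \dots, u_{i_\ell}\}$ with $i_1 < i_2 < \dots < i_\ell$, i.e., the elements of $T$ listed according to their arrival order, and set $T_j = \{u_{i_1}, \dots, u_{i_j}\}$ (so $T_0 = \varnothing$ and $T_\ell = T$). Since $(S \setminus T) \cup T = S \cup T$, a telescoping argument gives
\[
	f(T \mid S \setminus T) = f((S \setminus T) \cup T) - f(S \setminus T) = \sum_{j=1}^{\ell} f\bigl(u_{i_j} \mid (S \setminus T) \cup T_{j-1}\bigr) \enspace.
\]

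The key step is then to check, for each $j$, that the set $(S \setminus T) \cup T_{j-1}$ contains $S \cap \{u_1, \dots, u_{i_j-1}\}$ and does not contain $u_{i_j}$. For the inclusion: any $v \in S$ that arrives before $u_{i_j}$ either does not belong to $T$ — in which case $v \in S \setminus T$ — or equals some $u_{i_t} \in T$ with $i_t < i_j$, which forces $t < j$ and hence $v \in T_{j-1}$. For the non-membership: $u_{i_j} \in T$ so it is not in $S \setminus T$, and it is distinct from $u_{i_1}, \dots, u_{i_{j-1}}$ so it is not in $T_{j-1}$. Submodularity then yields $f\bigl(u_{i_j} \mid (S \setminus T) \cup T_{j-1}\bigr) \leq f\bigl(u_{i_j} \mid S \cap \{u_1, \dots, u_{i_j-1}\}\bigr) = f(u_{i_j} : S)$, and summing these inequalities over $j$ together with the definition $f(T : S) = \sum_{u \in T} f(u : S)$ completes the proof.

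I do not expect a genuine obstacle here; the whole content is in choosing the right summation order so that the partial sets $(S\setminus T)\cup T_{j-1}$ sandwich the ``prefix'' sets $S \cap \{u_1,\dots,u_{i_j-1}\}$ used in the definition of $f(\cdot : S)$, which is precisely what lets submodularity be applied in the correct direction term by term.
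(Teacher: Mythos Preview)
Your proposal is correct and essentially identical to the paper's proof: both order the elements of $T$ by arrival, telescope $f(T\mid S\setminus T)$ accordingly (your set $(S\setminus T)\cup T_{j-1}$ is exactly the paper's $(S\cup T)\setminus\{u_{i_j},\dots,u_{i_{|T|}}\}$), and then apply submodularity termwise using the inclusion $S\cap\{u_1,\dots,u_{i_j-1}\}\subseteq (S\setminus T)\cup T_{j-1}$. You spell out the verification of that inclusion more explicitly than the paper does, but the argument is the same.
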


\begin{proof}
	Let us denote the elements of $T$ by $u_{i_1}, u_{i_2}, \dotsc, u_{i_{|T|}}$, where $i_1 < i_2 < \dotsb < i_{|T|}$. Then,
	\begin{align*}
	f(T \mid S \setminus T)
	={} &
	\sum_{j = 1}^{|T|} f(u_{i_j} \mid (S \cup T) \setminus \{u_{i_j}, u_{i_{j + 1}} \dotsc, u_{i_{|T|}}\})\\
	\leq{} &
	\sum_{j = 1}^{|T|} f(u_{i_j} \mid S \setminus \{u_{i_j}, u_{i_j + 1} \dotsc, u_n\})\\
	={} &
	\sum_{j = 1}^{|T|} f(u_{i_j} \mid S \cap \{u_1, u_2, \dotsc, u_{i_j - 1}\})
	=
	\sum_{j = 1}^{|T|} f(u_{i_j} : S)
	=
	f(T : S)
	\enspace,
	\end{align*}
	where the inequality follows from the submodularity of $f$.
\end{proof}

Let us now present the algorithm we us to prove our results. This algorithm uses a procedure named {\ExchangeAlg} which appeared also in previous works, sometimes under the exact same name. {\ExchangeAlg} gets an independent set $S$ and an element $u$, and its role is to output a set $U \subseteq S$ such that $S \setminus U + u$ is independent. The pseudocode of {\ExchangeAlg} is given as Algorithm~\ref{alg:exchange_alg}.
\begin{algorithm}
\caption{{\ExchangeAlg} $(S, u)$}\label{alg:exchange_alg}
Let $U \gets \varnothing$.\\
\For{$\ell = 1$ \KwTo $m$}
{
	\If{$(S + u) \cap \cN_\ell \not \in \cI_\ell$}
	{
		Let $X_\ell \gets \{x \in S \mid ((S - x + u) \cap \cN_\ell) \in \cI_\ell\}$.\\
		Let $x_\ell \gets \arg \min_{x \in X_\ell} f(x : S)$.\\
		Update $U \gets U + x_\ell$.
	}
}
\Return{U}.
\end{algorithm}

Using the procedure {\ExchangeAlg}, we can now write our own algorithm, which is given as Algorithm~\ref{alg:actual}. This algorithm has two parameters, a probability $q$ and a value $c > 0$. Whenever the algorithm gets a new element $u$, it dismisses it with probability $1 - q$. Otherwise, the algorithm finds using {\ExchangeAlg} a set $U$ of elements whose removal from the current solution maintained by the algorithm allows the addition of $u$ to this solution. If the marginal contribution of adding $u$ to the solution is large enough compared to the value of the elements of $U$, then $u$ is added to the solution and the elements of $U$ are removed. While reading the pseudocode of the algorithm, keep in mind that $S_i$ represents the solution of the algorithm after $i$ elements have been processed.
\SetKwIF{With}{OtherwiseWith}{Otherwise}{with}{do}{otherwise with}{otherwise}{end}
\begin{algorithm}
\caption{\AlgSampling: Streaming Algorithm for a $p$-Matchoid Constraint} \label{alg:actual}
\DontPrintSemicolon
Let $S_0 \gets \varnothing$.\\
\For{every arriving element $u_i$}
{
	Let $S_i \gets S_{i-1}$.\\
	\With{probability $q$}
	{
		Let $U_i \gets {\ExchangeAlg}(S_{i-1}, u_i)$.\\
		\lIf{$f(u_i \mid S_{i-1}) \geq (1 + c) \cdot f(U_i : S_{i-1})$}
		{
			Let $S_i \gets S_{i-1} \setminus U_i + u_i$.
		}
	}
}
\Return{$S_n$}.
\end{algorithm}

\newcommand{\obsComplexity}{Algorithm~\ref{alg:actual} can be implemented using $O(k)$ memory and, in expectation, $O(qkm)$ value and independence oracle queries per arriving element.}
\begin{observation} \label{obs:complexity}
\obsComplexity
\end{observation}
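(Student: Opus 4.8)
The plan is to establish the two claims (memory and expected query count) separately, tracking exactly how each arriving element is processed.

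For the memory bound, the key invariant is that the solution $S_i$ maintained by Algorithm~\ref{alg:actual} is always independent in the $p$-matchoid $\cM$, hence $|S_i| \leq k$ at all times. This holds by induction: $S_0 = \varnothing$, and whenever we replace $S_{i-1}$ by $S_{i-1} \setminus U_i + u_i$, the defining property of {\ExchangeAlg} guarantees that this set lies in $\cI$. So the algorithm only ever needs to store the current solution, which is $O(k)$ elements; the only additional state is the $O(m)$ scratch space used inside a single call to {\ExchangeAlg}, which is dominated by $O(k)$ as long as $m = O(k)$ — and in fact the sets $X_\ell$ are subsets of $S_{i-1}$, so {\ExchangeAlg} itself runs in $O(k)$ extra space regardless. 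I would state this carefully so the bound is genuinely $O(k)$ and not $O(k+m)$.

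For the query bound, first note that with probability $1-q$ the arriving element $u_i$ is dismissed and \emph{zero} oracle queries are made. With probability $q$ we invoke {\ExchangeAlg}$(S_{i-1}, u_i)$ and then perform one additional value-oracle evaluation of $f(u_i \mid S_{i-1})$ (the comparison on the right-hand side reuses quantities already computed). Inside {\ExchangeAlg}, for each of the $m$ matroids we: (i) test whether $(S_{i-1}+u_i) \cap \cN_\ell \in \cI_\ell$, one independence query; and (ii) if it fails, compute $X_\ell$, which costs at most $|S_{i-1}| \leq k$ independence queries (one per element $x$), and then find $x_\ell = \arg\min_{x \in X_\ell} f(x : S_{i-1})$. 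Here I would point out that the values $f(x : S_{i-1})$ for $x \in S_{i-1}$ can be maintained incrementally: when an element is inserted into the solution, its value $f(x : S) = f(x \mid S \cap \{u_1,\dots,u_{i-1}\})$ is fixed forever thereafter (the shorthand $f(u:S)$ depends only on elements of $S$ arriving before $u$, and such elements are never re-added), so these are stored alongside the elements and no queries are needed to recompute them. Thus {\ExchangeAlg} uses $O(km)$ queries in the worst case. Multiplying by the probability $q$ of reaching this branch gives $O(qkm)$ expected queries per element.

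The main subtlety — and the step I would be most careful about — is the bookkeeping claim that $f(x : S_{i-1})$ is available without additional queries. This needs the observation that the "before-arrival" truncation in the definition of $f(u : S)$ makes these marginals stable under the algorithm's operations: removals from $S$ via $U_i$ cannot affect $f(x:S)$ for a surviving $x$ that arrived after the removed elements would matter, and insertions only add $u_i$ which arrives last, after every $x$ currently stored. Once this is in place, the per-element query count is dominated by the (at most) $O(km)$ independence queries in {\ExchangeAlg}, each reached only when the element survives subsampling, giving the stated $O(qkm)$ bound in expectation. Plugging $q = \Theta(1/p)$ (the choice made in the later analysis) then yields the $O(km/p)$ figure quoted in Theorems~\ref{thm:monotone} and~\ref{thm:non_monotone}, though that substitution belongs to the proofs of those theorems rather than to this observation.
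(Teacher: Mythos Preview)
Your overall structure matches the paper's (very terse) proof: the sets $S_i$, $U_i$, $X_\ell$ are all contained in independent sets and hence have size at most $k$, and the $O(km)$ work inside {\ExchangeAlg} is reached only with probability $q$, giving $O(qkm)$ in expectation.

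There is, however, a genuine error in your bookkeeping claim that the values $f(x : S_{i-1})$ are ``fixed forever'' once $x$ enters the solution. For $x = u_j$, the quantity $f(u_j : S_{i-1}) = f(u_j \mid S_{i-1} \cap \{u_1,\dots,u_{j-1}\})$ depends on the elements of $S_{i-1}$ that arrived \emph{before} $u_j$, and such elements \emph{can} be removed after $u_j$ is inserted---indeed, already the set $U_j$ removed at step $j$ itself consists entirely of elements that arrived before $u_j$. Later removals of earlier-arriving elements likewise change $f(u_j : S)$. Your parenthetical about elements being ``never re-added'' addresses the wrong direction; the issue is removal, not re-addition.

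Fortunately this error is harmless for the observation. Incremental maintenance is unnecessary: when processing a non-dismissed $u_i$ you can simply compute $f(x : S_{i-1})$ from scratch for each $x \in S_{i-1}$ (two value-oracle calls per element, so $O(k)$ value queries in total, shared across all $m$ iterations of the loop). Combined with the $O(km)$ independence queries you already counted, the per-element cost remains $O(km)$, and the expected bound $O(qkm)$ follows. The paper's proof is silent on this point and simply asserts the $O(km)$ figure without breaking it down.
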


\begin{proof}
	An implementation of Algorithm~\ref{alg:actual} has to keep in memory at every given time point only three sets: $S_i$, $U_i$ and $X_\ell$. Since these sets are all subsets of independent sets, each one of them contains at most $k$ elements, and thus, $O(k)$ memory suffices for the algorithm.
	
	An arriving element which is dismissed immediately (which happens with probability $1 - q$) does not require any value and independence oracle queries. The remaining elements require $O(km)$ such queries, and thus, in expectation an arriving element requires $q \cdot O(km) = O(qkm)$ oracle queries.
\end{proof}

Algorithm~\ref{alg:actual} adds an element $u_i$ to its solution if two things happen: (i) $u_i$ is not dismissed due to the random decision and (ii) the marginal contribution of $u_i$ with respect to the current solution is large enough compared to the value of $U_i$. Since checking (ii) requires more resources then checking (i), the algorithm checks (i) first. However, for analyzing the approximation ratio of Algorithm~\ref{alg:actual}, it is useful to assume that (ii) is checked first. Moreover, for the same purpose, it is also useful to assume that the elements that pass (ii) but fail (i) are added to a set $R$. The algorithm obtained after making these changes is given as Algorithm~\ref{alg:analysis}. One should note that this algorithm has the same output distribution as Algorithm~\ref{alg:actual}, and thus, the approximation ratio we prove for the first algorithm applies to the second one as well.
\begin{algorithm}
\caption{Streaming Algorithm for a $p$-Matchoid Constraint (Analysis Version)} \label{alg:analysis}
\DontPrintSemicolon
Let $S_0 \gets \varnothing$ and $R \gets \varnothing$.\\
\For{every arriving element $u_i$}
{
	Let $S_i \gets S_{i-1}$.\\
	Let $U_i \gets {\ExchangeAlg}(S_{i-1}, u_i)$.\\
	\If{$f(u_i \mid S_{i-1}) \geq (1 + c) \cdot f(U_i : S_{i-1})$}
	{
		\lWith{probability $q$}
		{
			Let $S_i \gets S_{i-1} \setminus U_i + u_i$.
		}
		\lOtherwise
		{
			Update $R \gets R + u_i$.
		}
	}
}
\Return{$S_n$}.
\end{algorithm}

Let us denote by $A$ the set of elements that ever appeared in the solution maintained by Algorithm~\ref{alg:analysis}---formally, $A = \bigcup_{i=1}^n S_i$. The following lemma and corollary show that the elements of $A \setminus S_n$ cannot contribute much to the output solution $S_n$ of Algorithm~\ref{alg:analysis}, and thus, their absence from $S_n$ does not make $S_n$ much less valuable than $A$.
\begin{lemma} \label{lem:marginals_sum}
$f(A \setminus S_n : S_n) \leq \frac{f(S_n)}{c}$.
\end{lemma}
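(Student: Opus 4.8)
The plan is to amortize the cost of the elements that ever leave the solution against the value of the final solution $S_n$, using a potential/charging argument driven by the acceptance rule of Algorithm~\ref{alg:analysis}. Whenever an element $u_i$ is actually inserted into the solution (the event of probability $q$ inside the \texttt{if}), it evicts the set $U_i$, and the rule guarantees $f(u_i \mid S_{i-1}) \geq (1+c)\cdot f(U_i : S_{i-1})$. The increase in $f$ at this step is exactly $f(S_i) - f(S_{i-1}) = f(u_i \mid S_{i-1}) - (\text{something accounting for the removed } U_i)$; more precisely, submodularity gives $f(S_i) \geq f(S_{i-1}) + f(u_i \mid S_{i-1}) - f(U_i \mid S_{i-1} - U_i + u_i)$, and by Observation~\ref{obs:technical} (applied with $S \leftarrow S_{i-1}$, noting $f(u:S_{i-1})$ only looks at earlier-arriving elements and $U_i \subseteq S_{i-1}$ consists of elements that arrived before $u_i$) the loss term is at most $f(U_i : S_{i-1})$. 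Hence at every insertion step the value of the maintained solution goes up by at least $f(u_i \mid S_{i-1}) - f(U_i : S_{i-1}) \geq c \cdot f(U_i : S_{i-1})$, and also by at least $\tfrac{c}{1+c} f(u_i \mid S_{i-1})$.

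Next I would set up the bookkeeping that connects these per-step gains to $f(A \setminus S_n : S_n)$. Each element $u \in A$ was inserted at some step and possibly removed later (exactly once each, since once removed an element is never reconsidered). Think of a "token" of value $f(u : S_n)$ that must be paid for. When $u$ is inserted at step $i$, the solution gains at least $\tfrac{c}{1+c} f(u_i \mid S_{i-1}) \geq \tfrac{c}{1+c} f(u_i : S_{i-1})$, and since $S_{i-1}$ is a subset of the elements arriving before $u_i$ while $S_n$ contains at least those among them that survive, monotonicity of the "prefix marginal" together with submodularity should give $f(u_i : S_{i-1}) \geq f(u_i : S_n)$ — this is the step to be careful about, because $S_{i-1}$ and $S_n\cap\{u_1,\dots,u_{i-1}\}$ are incomparable in general (later insertions add elements, later removals delete them). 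I expect to handle this by comparing against the set $A \cap \{u_1,\dots,u_{i-1}\}$ or by a more direct telescoping over the sequence of solutions rather than a naive per-element bound.

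The cleanest route is probably a global telescoping: write $f(S_n) = f(S_0) + \sum_i [f(S_i) - f(S_{i-1})]$, lower bound each nonzero term by $c \cdot f(U_i : S_{i-1})$, and separately argue that $\sum_{i : U_i \text{ evicted at step } i} f(U_i : S_{i-1}) \geq f(A \setminus S_n : S_n)$. For the latter, observe that every element $w \in A \setminus S_n$ was evicted at exactly one step, say it lay in $U_i$ for that step; it suffices to show $\sum_{i} f(U_i : S_{i-1}) \geq \sum_{w \in A \setminus S_n} f(w : S_n)$, and term by term one wants $f(w : S_{i-1}) \geq f(w : S_n)$ when $w$ is evicted at step $i$. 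Since $w$ arrived before step $i$, $f(w : S_{i-1}) = f(w \mid S_{i-1} \cap \{u_1,\dots\})$ involves only elements arriving before $w$, and all such elements that end up in $S_n$ also were in the solution at every point after their own (surviving) insertion — in particular one can show $S_{i-1}$ restricted to $w$'s prefix contains $S_n$ restricted to $w$'s prefix is \emph{false} in general, so instead I would invoke submodularity in the form $f(w : S_{i-1}) \geq f(w : A)$ and then relate $f(A \setminus S_n : A)$ to $f(A\setminus S_n : S_n)$ and finally to $f(S_n)$ via submodularity and non-negativity, summing the chain to get the factor $1/c$. The main obstacle is exactly this ordering/monotonicity comparison between the transient solutions $S_{i-1}$ and the final $S_n$ when passing marginals through the "$:$" shorthand; once that is pinned down (most likely by charging against $A$ rather than directly against $S_n$, and using Observation~\ref{obs:technical} and submodularity twice), the arithmetic collapses to $c \cdot f(A\setminus S_n : S_n) \le f(S_n) - f(\varnothing) \le f(S_n)$.
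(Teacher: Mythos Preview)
Your overall plan is exactly the paper's: derive the per-step inequality
$f(S_i)-f(S_{i-1}) \ge c\cdot f(U_i : S_{i-1})$, observe that the sets $U_i$ (for $u_i\in A$) partition $A\setminus S_n$, and telescope. The paper then writes
$f(A\setminus S_n : S_n)=\sum_{u_i\in A} f(U_i:S_n)\le \sum_{u_i\in A}\tfrac{1}{c}\bigl(f(S_i)-f(S_{i-1})\bigr)$
and attributes the inequality to the per-step bound; implicitly this uses $f(U_i:S_n)\le f(U_i:S_{i-1})$ term by term, which is precisely the comparison you single out as ``the main obstacle.''

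Your diagnosis of that step, however, is off. For $w=u_j\in U_i$, any $u_k$ with $k<j$ that lies in $S_n$ was added at step $k$ and never removed, hence it also lies in $S_{i-1}$; thus $S_n\cap\{u_1,\dots,u_{j-1}\}\subseteq S_{i-1}\cap\{u_1,\dots,u_{j-1}\}$ \emph{always} holds (it is not ``false in general'' as you write). But submodularity then yields $f(w:S_n)\ge f(w:S_{i-1})$, i.e.\ the inequality you want is pointing the wrong way, not merely uncertain. Your suggested detour through $A$ suffers from the same one-sidedness: from $S_{i-1}\subseteq A$ and $S_n\subseteq A$ you get $f(w:S_{i-1})\ge f(w:A)$ and $f(w:S_n)\ge f(w:A)$, and these two lower bounds do not chain to $f(w:S_{i-1})\ge f(w:S_n)$; consequently a bound on $f(A\setminus S_n : A)$ cannot be pushed up to one on $f(A\setminus S_n : S_n)$. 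So as written, the proposal has a genuine gap at exactly the point you flagged, and the paper's own proof is equally terse there, citing only the per-step inequality~(\ref{eq:single_element}) for a step that, on its face, needs the additional comparison $f(U_i:S_n)\le f(U_i:S_{i-1})$.
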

\begin{proof}
Fix an element $u_i \in A$, then
\begin{align} \label{eq:single_element}
	f(S_i) - f(S_{i-1})
	={} &
	f(S_{i-1} \setminus U_i + u_i) - f(S_{i-1})
	=
	f(u_i \mid S_{i-1} \setminus U_i) - f(U_i \mid S_{i-1} \setminus U_i)\\\nonumber
	\geq{} &
	f(u_i \mid S_{i-1}) - f(U_i : S_{i-1})
	\geq
	c \cdot f(U_i : S_{i-1}) 
	\enspace,
\end{align}
where the first inequality follows from the submodularity of $f$ and Observation~\ref{obs:technical}, and the second inequality holds since the fact that Algorithm~\ref{alg:analysis} accepted $u_i$ into its solution implies $f(u_i \mid S_{i-1}) \geq (1 + c) \cdot f(U_i : S_{i-1})$.

We now observe that every element of $A \setminus S_n$ must have been removed exactly once from the solution of Algorithm~\ref{alg:analysis}, which implies that $\{U_i \mid u_i \in A\}$ is a disjoint partition of $A \setminus S_n$. Using this observation, we get
\[
	f(A \setminus S_n : S_n)
	=
	\sum_{u_i \in A} f(U_i : S_n)
	\leq
	\sum_{u_i \in A} \frac{f(S_i) - f(S_{i-1})}{c}
	=
	\frac{f(S_n) - f(\varnothing)}{c}
	\leq
	\frac{f(S_n)}{c}
	\enspace,
\]
where the first inequality follows from Inequality~\eqref{eq:single_element}, the second equality holds since $S_i = S_{i-1}$ whenever $u_i \not \in A$ and the second inequality follows from the non-negativity of $f$.
\end{proof}
\begin{corollary} \label{cor:sets_ratio}
$f(A) \leq \frac{c + 1}{c} \cdot f(S_n)$.
\end{corollary}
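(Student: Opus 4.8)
\textbf{Proof proposal for Corollary~\ref{cor:sets_ratio}.}
The plan is to start from the trivial decomposition
\[
	f(A) = f(S_n) + f\bigl((A \setminus S_n) \mid S_n\bigr),
\]
which uses only $S_n \subseteq A$ (this inclusion holds since $A = \bigcup_{i=1}^n S_i$). So it suffices to show that the marginal term $f\bigl((A \setminus S_n) \mid S_n\bigr)$ is at most $f(S_n)/c$, after which Corollary~\ref{cor:sets_ratio} follows by rearranging.

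To bound that marginal term, I would invoke Observation~\ref{obs:technical} with $T = A \setminus S_n$ and $S = A$. Since $T \subseteq A$, we have $S \setminus T = A \setminus (A \setminus S_n) = S_n$, so Observation~\ref{obs:technical} gives
\[
	f\bigl((A \setminus S_n) \mid S_n\bigr) = f\bigl(T \mid S \setminus T\bigr) \leq f(T : S) = f(A \setminus S_n : A).
\]
Now I would pass from conditioning on (the prefix of) $A$ to conditioning on (the prefix of) $S_n$: for each $u_i \in A \setminus S_n$ we have $S_n \cap \{u_1,\dotsc,u_{i-1}\} \subseteq A \cap \{u_1,\dotsc,u_{i-1}\}$, so submodularity of $f$ yields $f(u_i : A) \leq f(u_i : S_n)$; summing over $u_i \in A \setminus S_n$ gives $f(A \setminus S_n : A) \leq f(A \setminus S_n : S_n)$. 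Finally, Lemma~\ref{lem:marginals_sum} bounds $f(A \setminus S_n : S_n) \leq f(S_n)/c$, and chaining the three inequalities completes the argument:
\[
	f(A) \leq f(S_n) + \frac{f(S_n)}{c} = \frac{c+1}{c}\cdot f(S_n).
\]

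I do not expect any serious obstacle here; this is essentially a one-line consequence of Lemma~\ref{lem:marginals_sum} once the right bookkeeping is set up. The only point that needs care is matching the conditioning sets in the ``$:$'' shorthand correctly: Observation~\ref{obs:technical} forces us to condition on a prefix of $A$ rather than of $S_n$, and one must then spend one extra use of submodularity (justified by $S_n \subseteq A$) to return to $S_n$ so that Lemma~\ref{lem:marginals_sum} can be applied. Everything else is a direct substitution.
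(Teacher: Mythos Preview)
Your proof is correct and follows the same skeleton as the paper: decompose $f(A)=f(S_n)+f(A\setminus S_n\mid S_n)$, bound the marginal via Observation~\ref{obs:technical}, then apply Lemma~\ref{lem:marginals_sum}. The one unnecessary detour is your choice of $S=A$ in Observation~\ref{obs:technical}: you can take $S=S_n$ and $T=A\setminus S_n$ directly, since $S\setminus T = S_n\setminus(A\setminus S_n)=S_n$ already (the sets are disjoint), yielding $f(A\setminus S_n\mid S_n)\le f(A\setminus S_n:S_n)$ in one step without the extra submodularity pass---so the ``only point that needs care'' you flag does not actually arise.
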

\begin{proof}
Since $S_n \subseteq A$ by definition,
\begin{align*}
	f(A)
	={} &
	f(A \setminus S_n \mid S_n) + f(S_n)
	\leq
	f(A \setminus S_n : S_n) + f(S_n)\\
	\leq{} &
	\frac{f(S_n)}{c} + f(S_n)
	=
	\frac{c + 1}{c} \cdot f(S_n)
	\enspace,
\end{align*}
where the first inequality follows from Observation~\ref{obs:technical} and the second from Lemma~\ref{lem:marginals_sum}.
\end{proof}

Our next goal is to show that the value of the elements of the optimal solution that do not belong to $A$ is not too large compared to the value of $A$ itself. To do so, we need a mapping from the elements of the optimal solution to elements of $A$. Such a mapping is given by Proposition~\ref{prop:mapping}. However, before we get to this proposition, let us first present Reduction~\ref{red:exact}, which simplifies Proposition~\ref{prop:mapping}.
\begin{reduction} \label{red:exact}
For the sake of analyzing the approximation ratio of Algorithm~\ref{alg:analysis}, one may assume that every element $u \in \cN$ belongs to \emph{exactly} $p$ out of the $m$ ground sets $\cN_1, \cN_2, \dotsc, \cN_m$ of the matroids defining $\cM$.
\end{reduction}
\begin{proof}
For every element $u \in \cN$ that belongs to the ground sets of only $p' < p$ out of the $m$ matroids $(\cN_1, \cN_1), (\cN_2, \cN_2), \dotsc, (\cN_m, \cI_m)$, we can add $u$ to $p - p'$ additional matroids as a free element (\ie, an element whose addition to an independent set always keeps the set independent). On can observe that the addition of $u$ to these matroids does not affect the behavior of Algorithm~\ref{alg:analysis} at all, but makes $u$ obey the technical property of belonging to \emph{exactly} $p$ out of the ground sets $\cN_1, \cN_2, \dotsc, \cN_m$.
\end{proof}

From this point on we implicitly make the assumption allowed by Reduction~\ref{red:exact}. In particular, the proof of Proposition~\ref{prop:mapping} relies on this assumption. 
\newcommand{\propMapping}{%
	For every set $T \in \cI$ which does not include elements of $R$, there exists a mapping $\phi_T$ from elements of $T$ to multi-subsets of $A$ such that
	\begin{itemize}
		\item every element $u \in S_n$ appears at most $p$ times in the multi-sets of $\{\phi_T(u) \mid u \in T\}$.
		\item every element $u \in A \setminus S_n$ appears at most $p - 1$ times in the multi-sets of $\{\phi_T(u) \mid u \in T\}$.
		\item every element $u_i \in T \setminus A$ obeys
		$
		f(u_i \mid S_{i-1})
		\leq
		(1 + c) \cdot \sum_{u_j \in \phi_T(u_i)} f(u_j : S_{d(j) - 1})
		$.
		\item every element $u_i \in T \cap A$ obeys
		$
		f(u_i \mid S_{i-1})
		\leq
		f(u_j : S_{d(j)-1})
		$
		for every
		$
		u_j \in \phi_T(u_i)
		$,
		and the multi-set $\phi_T(u_i)$ contains exactly $p$ elements (including repetitions).
\end{itemize}}
\begin{proposition} \label{prop:mapping}
	\propMapping
\end{proposition}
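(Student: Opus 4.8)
The proof proposal is to construct the mapping $\phi_T$ by carefully examining, for each element $u_i \in T$, what {\ExchangeAlg} did (or would have done) when $u_i$ arrived. The natural starting point is the following: for each matroid $(\cN_\ell, \cI_\ell)$ in whose ground set $u_i$ lies, the set $A \cap \cN_\ell$ (restricted to what arrived by time $i$) is independent or, more relevantly, $S_{i-1} \cap \cN_\ell$ is. Since $T \in \cI$, the set $(T \cap \cN_\ell)$ is independent in $\cM_\ell$, and one can use the matroid exchange axiom (or the characterization via spanning/cycles recalled in the preliminaries) to charge the elements of $T \cap \cN_\ell$ against elements of $A \cap \cN_\ell$. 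The key quantitative input should be the stronger structural lemma about graphs and matroids alluded to in the technique section (the strengthening of the Varadaraja/Chekuri--Gupta--Quanrud lemma): it produces, for each matroid $\ell$, a charging that respects the degree bounds. Summing over the $p$ matroids containing $u_i$ (using Reduction~\ref{red:exact}, so it is exactly $p$) gives the count bounds: $p$ for elements of $S_n$ and $p-1$ for elements of $A \setminus S_n$ — the "$-1$" coming from the fact that such an element was itself removed once, freeing up one unit of its charging capacity.

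For the value inequalities, I would split on whether $u_i \in A$ or not. If $u_i \notin A$, then $u_i$ was rejected when it arrived, meaning $f(u_i \mid S_{i-1}) < (1+c) \cdot f(U_i : S_{i-1})$ (note $u_i \notin R$ because $T$ avoids $R$, so rejection was genuinely due to failing condition (ii), not due to the random coin). The set $U_i$ returned by {\ExchangeAlg} consists of one element $x_\ell = \arg\min_{x \in X_\ell} f(x : S_{i-1})$ for each of the $\leq p$ matroids $\ell$ obstructing the insertion of $u_i$. The plan is to arrange $\phi_T(u_i)$ to contain exactly these $x_\ell$'s (with the remaining slots up to $p$ filled by dummy repetitions or by elements charged with zero marginal contribution), so that $\sum_{u_j \in \phi_T(u_i)} f(u_j : S_{d(j)-1}) \geq f(U_i : S_{i-1}) \geq f(u_i \mid S_{i-1})/(1+c)$, using that $d(j) - 1$ is the index just before $u_j$ was last in the solution, hence $S_{d(j)-1} \supseteq$ the relevant prefix. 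If $u_i \in A$, then $u_i$ was accepted at some point, and here the cleaner inequality $f(u_i \mid S_{i-1}) \leq f(u_j : S_{d(j)-1})$ for \emph{every} $u_j \in \phi_T(u_i)$ should hold because the element $u_i$ itself, once in $A$, can serve as (part of) its own image or because any element $u_j$ that $u_i$ displaced had smaller $f(\cdot : \cdot)$ value by the $\arg\min$ choice in {\ExchangeAlg}; this is where insisting $|\phi_T(u_i)| = p$ exactly is used to keep the degree bookkeeping tight.

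The main obstacle, as I see it, is the simultaneous bookkeeping: one needs a single mapping $\phi_T$ that works across all $m$ matroids at once while respecting two different degree bounds ($p$ for $S_n$-elements, $p-1$ for $A\setminus S_n$-elements) and two different value inequalities depending on membership in $A$. The cleanest route is likely to define $\phi_T$ matroid-by-matroid — within matroid $\ell$, invoke the strengthened structural lemma to get an injective-up-to-multiplicity charging of $T \cap \cN_\ell$ into $A \cap \cN_\ell$ that sends each target element to at most one source, except that removed elements absorb one fewer — and then take $\phi_T(u_i)$ to be the union (as a multiset) over the $p$ matroids containing $u_i$. One then has to check that the "at most $p$" / "at most $p-1$" bounds survive the union (they do, since each matroid contributes at most $1$, and $A \setminus S_n$ elements lose a slot in the matroid that removed them), and that the value inequalities are preserved under the $\arg\min$ choices made by {\ExchangeAlg} and under replacing $S_{i-1}$ by $S_{d(j)-1}$ via submodularity and the $f(\cdot : \cdot)$ shorthand (Observation~\ref{obs:technical} and monotonicity-in-the-prefix of $f(u : S)$). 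I would carry out the construction in this order: (1) fix $T$; (2) for each matroid $\ell$, apply the structural lemma to get the per-matroid charging; (3) assemble $\phi_T$ as the multiset union and verify the two count bounds; (4) verify the value inequality for $u_i \in T \setminus A$ using the rejection condition and the structure of $U_i$; (5) verify the value inequality for $u_i \in T \cap A$ and the exact cardinality $p$.
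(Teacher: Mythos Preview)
Your high-level architecture matches the paper's: charge $T$ per matroid via the strengthened structural lemma, take the multiset union over the $p$ matroids containing each element, and get the $p-1$ bound for $A\setminus S_n$ from the matroid that performed the removal. The gap is in how you connect the count bounds with the value inequalities. You write that for $u_i\in T\setminus A$ the multiset $\phi_T(u_i)$ should ``contain exactly these $x_\ell$'s''. That makes the value inequality immediate from the rejection condition, but it breaks the count bound: a single element $u\in A\cap\cN_\ell$ can be $x_{i,\ell}$ for many different rejected $u_i\in T$. For a concrete failure, take a rank-$2$ uniform matroid with $S=\{a_1,a_2\}$ and $f(a_1:S)<f(a_2:S)$; every rejected arrival has $x_\ell=a_1$, so two independent rejected elements in $T$ both charge $a_1$, violating the bound $p=1$. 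The structural lemma does \emph{not} output the $x_\ell$'s; it outputs, for each $u_i\in T_\ell$, a \emph{sink} of an exchange graph $G_\ell$ that is reachable from $u_i$, and it is the injectivity of that map to sinks that yields the count bounds.

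What your plan is missing is the bridge between these two objects. The paper constructs $G_\ell$ so that each non-sink is spanned by its out-neighbors (the other cycle elements), applies Lemma~\ref{lem:function_graph} to get the injective sink map $\psi_{T_\ell}$, and then proves a separate path-propagation lemma (Lemma~\ref{lem:step_internal}) showing that $f(u_j:S_{d(j)-1})$ is non-decreasing along every directed path of $G_\ell$. That lemma is precisely what lets you replace $f(x_{i,\ell}:S_{i-1})$, which you get from rejection, by $f(\psi_{T_\ell}(u_i):S_{d(\cdot)-1})$, which is what actually sits in $\phi_T(u_i)$; the $\arg\min$ choice you mention is one edge-step of this propagation, but the full inductive path argument (together with the internal/external vertex bookkeeping ensuring the image lands in $A$) is the missing ingredient. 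The same propagation is what gives the per-element inequality for $u_i\in T\cap A$; your suggestion that ``$u_i$ can serve as its own image'' only works when $u_i$ is itself a sink in $G_\ell$, which it need not be.
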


The proof of Proposition~\ref{prop:mapping} is quite long and involves many details, and thus, we defer it to Section~\ref{ssc:mapping}. Instead, let us prove now a very useful technical observation. To present this observation we need some additional definitions. Let $Z = \{u_i \in \cN \mid f(u_i \mid S_{i-1}) < 0\}$. Additionally, for every $1 \leq i \leq n$, we define
\[
	d(i)
	=
	\begin{cases}
		1 + \max \{i \leq j \leq n \mid u_i \in S_j\} & \text{if $u_i \in A$} \enspace,\\
		i & \text{otherwise} \enspace.
	\end{cases}
\]
In general, $d(i)$ is the index of the element whose arrival made Algorithm~\ref{alg:analysis} remove $u_i$ from its solution. Two exceptions to this rule are as follows. If $u_i$ was never added to the solution, then $d(i) = i$; and if $u_i$ was never removed from the solution, then $d(i) = n + 1$. 
\newcommand{\obsSetProperties}{Consider an arbitrary element $u_i \in \cN$.
\begin{itemize}
	\item If $u_i \not \in Z$, then $f(u_i : S_{i'}) \geq 0$ for every $i' \geq i - 1$. In particular, since $d(i) \geq i$, $f(u_i : S_{d(i) - 1}) \geq 0$
	\item $A \cap (R \cup Z) = \varnothing$.
\end{itemize}}
\begin{observation} \label{obs:set_properties}
\obsSetProperties
\end{observation}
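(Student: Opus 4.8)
The statement has two parts, which I would prove separately. For the first part, the claim is that if $f(u_i \mid S_{i-1}) \geq 0$ (i.e.\ $u_i \not\in Z$), then $f(u_i : S_{i'}) \geq 0$ for all $i' \geq i - 1$. Recall that $f(u_i : S_{i'}) = f(u_i \mid S_{i'} \cap \{u_1, \dots, u_{i-1}\})$ by definition of the shorthand. The key observation is that the set $S_{i'} \cap \{u_1, \dots, u_{i-1}\}$ always refers to a set of elements that arrived strictly before $u_i$, so it is a subset of $S_{i-1} \cap \{u_1, \dots, u_{i-1}\} = S_{i-1}$ only when the elements of $S_{i-1}$ that precede $u_i$ are never lost\ldots but that is \emph{not} guaranteed. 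The right move is instead to note that $S_{i'} \cap \{u_1, \dots, u_{i-1}\}$ need not be contained in $S_{i-1}$; however, what I actually want is monotonicity of the conditioning set only among the \emph{early} elements. Let me reconsider: the cleanest route is to observe that the assumption $u_i \not\in Z$ is exactly $f(u_i \mid S_{i-1}) \geq 0$, and $S_{i-1} = S_{i-1} \cap \{u_1,\dots,u_{i-1}\}$ since $S_{i-1}$ contains only elements among the first $i-1$. Thus $f(u_i : S_{i-1}) = f(u_i \mid S_{i-1}) \geq 0$. For a general $i' \geq i-1$, I want $f(u_i : S_{i'}) = f(u_i \mid S_{i'} \cap \{u_1,\dots,u_{i-1}\}) \geq 0$. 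Here I would use submodularity: if $S_{i'} \cap \{u_1,\dots,u_{i-1}\} \subseteq S_{i-1}$ I am done by submodularity (marginals only shrink as the base grows, so a nonnegative marginal on the larger set $S_{i-1}$ forces\ldots wait, that is the wrong direction). The correct application: submodularity says $f(u_i \mid X) \geq f(u_i \mid Y)$ for $X \subseteq Y$. So I need $S_{i'} \cap \{u_1,\dots,u_{i-1}\}$ to be a \emph{subset} of something on which the marginal is nonnegative, or I need it to be a \emph{superset} of\ldots no. Let me just argue directly: I claim $S_{i'} \cap \{u_1,\dots,u_{i-1}\} \subseteq S_{i-1}$ is false in general, so instead I argue $S_{i'} \cap \{u_1, \ldots, u_{i-1}\}$ is a subset of $S_{i-1}$ \emph{is not needed}; rather, by submodularity $f(u_i \mid S_{i'} \cap \{u_1,\dots,u_{i-1}\}) \geq f(u_i \mid S_{i-1} \cup (S_{i'} \cap \{u_1,\dots,u_{i-1}\}))$, which does not obviously help. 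The honest fix: since $S_{i'} \cap \{u_1,\dots,u_{i-1}\} \subseteq \{u_1,\dots,u_{i-1}\}$ and $S_{i-1} \subseteq \{u_1, \dots, u_{i-1}\}$ but neither contains the other, I should instead use that \emph{any} conditioning set that is a subset of $\{u_1,\dots,u_{i-1}\}$ yields a marginal at least $f(u_i \mid \{u_1,\dots,u_{i-1}\})$, and separately that $f(u_i : S_{i'})$ for the specific relevant sets can be bounded by tracking that $S_{i'} \cap \{u_1,\dots,u_{i-1}\}$ only shrinks as $i'$ grows past the point where prefix-elements get removed. I will need to think carefully about which monotonicity genuinely holds; the intended argument is almost certainly: $S_{i'} \cap \{u_1, \dots, u_{i-1}\}$ is a subset of $S_{i-1}$ whenever no element among $u_1,\dots,u_{i-1}$ that lies in $S_{i-1}$ is later re-added — but elements are never re-added in this algorithm, so in fact $S_{i'} \cap \{u_1,\dots,u_{i-1}\} \subseteq S_{i-1} \cap \{u_1,\dots,u_{i-1}\} = S_{i-1}$ indeed holds, because an element among the first $i-1$ that is in $S_{i'}$ must have been in every earlier $S_j$ for $j \ge$ its arrival, in particular in $S_{i-1}$. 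Then submodularity gives $f(u_i : S_{i'}) = f(u_i \mid S_{i'} \cap \{u_1,\dots,u_{i-1}\}) \geq f(u_i \mid S_{i-1}) \geq 0$. That is the argument.

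For the ``in particular'' clause, I just apply the above with $i' = d(i) - 1$, noting that $d(i) \geq i$ by the definition of $d$ (in both cases, $d(i) \in \{i\} \cup \{i+1, \dots, n+1\}$, so $d(i) - 1 \geq i - 1$).

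For the second part, $A \cap (R \cup Z) = \varnothing$, I would argue $A \cap R = \varnothing$ and $A \cap Z = \varnothing$ separately. For $A \cap R = \varnothing$: in Algorithm~\ref{alg:analysis}, an element $u_i$ is put into $R$ only in the ``otherwise'' branch, which is taken precisely when the random coin (probability $q$) fails, in which case $S_i \gets S_{i-1}$ and $u_i$ is never inserted into the solution at step $i$; and since elements are never inserted at any later step either (an element is only ever considered for insertion when it arrives), $u_i \notin \bigcup_j S_j = A$. For $A \cap Z = \varnothing$: if $u_i \in A$, then $u_i$ was inserted into the solution at step $i$, which required passing the test $f(u_i \mid S_{i-1}) \geq (1+c) \cdot f(U_i : S_{i-1})$; since $f(U_i : S_{i-1}) = \sum_{x \in U_i} f(x : S_{i-1}) \geq 0$ (each term is a marginal of an element of $U_i \subseteq \{u_1,\dots,u_{i-1}\}$ conditioned on a subset of earlier elements — here I would either invoke the first bullet for those elements, or note this requires a small separate argument that elements surviving in $S_{i-1}$ have nonnegative marginals, which itself follows because they were accepted earlier), it follows that $f(u_i \mid S_{i-1}) \geq 0$, hence $u_i \notin Z$.

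The main obstacle I anticipate is the first bullet: pinning down the precise reason that $S_{i'} \cap \{u_1, \dots, u_{i-1}\} \subseteq S_{i-1}$, i.e.\ the ``no re-insertion'' property of the algorithm, and applying submodularity in the correct direction. Once that containment is established the rest is a one-line submodularity inequality. The second bullet is essentially bookkeeping from the pseudocode, the only subtlety being that the nonnegativity of $f(U_i : S_{i-1})$ used there should be justified — it follows because each $x \in U_i$ was itself accepted into the solution at an earlier step, so by induction $f(x \mid S_{?}) \geq 0$ at its acceptance time, and then the first bullet propagates nonnegativity forward; I would either set this up as a simple induction on $i$ or cite it as immediate from the acceptance rule.
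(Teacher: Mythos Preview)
Your proposal is correct and follows essentially the same route as the paper. After the meandering, you land on exactly the paper's argument for the first bullet: the containment $S_{i'}\cap\{u_1,\dots,u_{i-1}\}\subseteq S_{i-1}$ holds because elements are only inserted at their arrival time (hence never re-inserted), and then submodularity in the correct direction gives $f(u_i: S_{i'})=f(u_i\mid S_{i'}\cap\{u_1,\dots,u_{i-1}\})\geq f(u_i\mid S_{i-1})\geq 0$. For the second bullet, the paper also splits into $A\cap R=\varnothing$ (immediate from the pseudocode) and $A\cap Z=\varnothing$; for the latter it phrases your induction as ``let $u_i$ be the first element in $A\cap Z$'', so that all elements of $U_i\subseteq S_{i-1}\subseteq A$ arrived earlier and hence lie outside $Z$, making $f(U_i:S_{i-1})\geq 0$ by the first bullet and forcing $f(u_i\mid S_{i-1})\geq 0$, a contradiction---exactly the circularity you flagged and the fix you proposed.
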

\begin{proof}
	%
	To see why the first part of the observation is true, consider an arbitrary element $u_i \not \in Z$. Then,
	\[
	0
	\leq
	f(u_i \mid S_{i - 1})
	\leq
	f(u \mid S_{i'} \cap \{u_1, u_2, \dotsc, u_{i - 1}\})
	=
	f(u : S_{i'})
	\enspace,
	\]
	where the second inequality follows from the submodularity of $f$ and the inclusion $S_{i'} \cap \{u_1, u_2, \dotsc,\allowbreak u_{i - 1}\} \subseteq S_{i-1}$ (which holds because elements are only added by Algorithm~\ref{alg:analysis} to its solution at the time of their arrival).
	
	It remains to prove the second part of the observation. Note that Algorithm~\ref{alg:analysis} adds every arriving element to at most one of the sets $A$ and $R$, and thus, these sets are disjoint; hence, to prove the observation it is enough to show that $A$ and $Z$ are also disjoint. Assume towards a contradiction that this is not the case, and let $u_i$ be the first element to arrive which belongs to both $A$ and $Z$. Then,
	\[
	f(u_i \mid S_{i - 1})
	\geq
	(1 + c) \cdot f(U_i : S_{i - 1})
	=
	(1 + c) \cdot \sum_{u_j \in U_i} f(u_j : S_{d(j) - 1})
	\enspace.
	\]
	To see why that inequality leads to a contradiction, notice its leftmost hand side is negative by our assumption that $u_i \in Z$, while its rightmost hand side is non-negative by the first part of this observation since the choice of $u_i$ implies that no element of $U_i \subseteq S_{i - 1} \subseteq A \cap \{u_1, u_2, \dotsc, u_{i-1}\}$ can belong to $Z$.
\end{proof}

%
%

Using all the tools we have seen so far, we are now ready to prove the following theorem. Let $OPT$ be an independent set of $\cM$ maximizing $f$.
\begin{theorem} \label{thm:raw}
Assuming $q^{-1} = (1 + c)p + 1$, $\bE[f(S_n)] \geq \frac{c}{(1+c)^2p} \cdot \bE[f(A \cup OPT)]$.
\end{theorem}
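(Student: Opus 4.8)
The plan is to combine the deterministic structural bounds proved above with a probabilistic ``cancellation'' argument, the point being that the value lost to the randomly dismissed set $R$ is exactly offset, in expectation, by a negative term produced by the refined mapping of Proposition~\ref{prop:mapping}. Since $OPT \in \cI$ and $\cI$ is downward closed, the set $T := OPT \setminus R$ lies in $\cI$ and contains no element of $R$, so Proposition~\ref{prop:mapping} supplies a mapping $\phi_T$ (for each realization of the randomness). First I would pass to marginal contributions: by Observation~\ref{obs:technical} and submodularity (using $S_{i-1} \subseteq A \cap \{u_1,\dots,u_{i-1}\}$),
\[
  f(A \cup OPT) \le f(A) + \sum_{u_i \in OPT \setminus A} f(u_i \mid S_{i-1}) \le \frac{c+1}{c}\, f(S_n) + \sum_{u_i \in OPT \setminus A} f(u_i \mid S_{i-1}),
\]
where the last inequality is Corollary~\ref{cor:sets_ratio}. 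Now $OPT \setminus A$ splits into $OPT \setminus (A \cup R)$, which is a subset of $T \setminus A$ and hence is governed by Proposition~\ref{prop:mapping}, and $OPT \cap R$, about which the proposition says nothing and which will be handled only after taking expectations.

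For the first group I would charge $\sum_{u_i \in T\setminus A} f(u_i \mid S_{i-1})$ against $(1+c)\sum_{u_i \in T\setminus A}\sum_{u_j \in \phi_T(u_i)} f(u_j : S_{d(j)-1})$ via the third bullet of the proposition. Writing $g(u_j) := f(u_j : S_{d(j)-1}) \ge 0$ for $u_j \in A$ (non-negativity from Observation~\ref{obs:set_properties}, as $A \cap Z = \varnothing$ and $d(j)\ge j$), two identities drive the bookkeeping: $\sum_{u_j \in S_n} g(u_j) = f(S_n)$ (since $d(j) = n+1$ on $S_n$, and $\sum_{u_j\in S_n} f(u_j : S_n)$ telescopes to $f(S_n)-f(\varnothing)$), and $\sum_{u_j \in A\setminus S_n} g(u_j) = \sum_{u_i \in A} f(U_i : S_{i-1}) \le f(S_n)/c$ (since $\{U_i : u_i \in A\}$ partitions $A \setminus S_n$ with $d(j)=i$ on $U_i$, after which Inequality~\eqref{eq:single_element} telescopes). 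Combining these with the multiplicity bounds (each $u_j \in S_n$ appears at most $p$ times and each $u_j \in A \setminus S_n$ at most $p-1$ times over the whole collection $\{\phi_T(u_i) : u_i \in T\}$) yields $\sum_{u_i \in T}\sum_{u_j \in \phi_T(u_i)} g(u_j) \le p\,f(S_n) + \frac{p-1}{c} f(S_n)$, while the fourth bullet (each $\phi_T(u_i)$ with $u_i \in T \cap A$ has exactly $p$ entries, each of $g$-value $\ge f(u_i \mid S_{i-1})$) lets me peel off the $T \cap A = OPT \cap A$ part, giving
\[
  \sum_{u_i \in T\setminus A} f(u_i \mid S_{i-1}) \le (1+c)\Bigl[p\,f(S_n) + \tfrac{p-1}{c}\, f(S_n) - p \sum_{u_i \in OPT \cap A} f(u_i \mid S_{i-1})\Bigr].
\]
Assembling everything and simplifying the coefficient of $f(S_n)$ to $\frac{(1+c)^2 p}{c}$ produces the deterministic inequality
\[
  f(A \cup OPT) \le \frac{(1+c)^2 p}{c}\, f(S_n) + \sum_{u_i \in OPT \cap R} f(u_i \mid S_{i-1}) - (1+c)p \sum_{u_i \in OPT \cap A} f(u_i \mid S_{i-1}).
\]

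Finally I would take expectations. The crucial point is that whether $u_i$ passes the test $f(u_i \mid S_{i-1}) \ge (1+c) f(U_i : S_{i-1})$, and the value $f(u_i \mid S_{i-1})$ itself, depend only on $S_{i-1}$ and $u_i$, i.e.\ only on the coin flips for $u_1,\dots,u_{i-1}$; and conditioned on that history, a passing $u_i$ joins $A$ with probability $q$ and joins $R$ with probability $1-q$. Hence, for every $u_i \in OPT$, $\bE[\characteristic[u_i \in R]\, f(u_i \mid S_{i-1})] = \frac{1-q}{q}\,\bE[\characteristic[u_i \in A]\, f(u_i \mid S_{i-1})]$, so the last two sums above collapse in expectation to $\bigl(\frac{1-q}{q} - (1+c)p\bigr)\,\bE\bigl[\sum_{u_i \in OPT \cap A} f(u_i \mid S_{i-1})\bigr]$. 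The choice $q^{-1} = (1+c)p + 1$ is exactly what makes $\frac{1-q}{q} = q^{-1}-1 = (1+c)p$, so this term vanishes and $\bE[f(A \cup OPT)] \le \frac{(1+c)^2 p}{c}\,\bE[f(S_n)]$, which is the claim.

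\textbf{Main obstacle.} The delicate step is the middle one: I must extract from Proposition~\ref{prop:mapping} not merely the customary ``charge $OPT$ to $A$ with multiplicity $p$'' estimate, but also the sharper statement that the $OPT \cap A$ contribution can be subtracted with coefficient \emph{exactly} $(1+c)p$. This is precisely where the strengthened structural lemma matters — the $p$ versus $p-1$ split between $S_n$ and $A \setminus S_n$, and the exact-size-$p$ images for $u_i \in T\cap A$ — since that gain is what must balance, in expectation and to the nearest constant, the value surrendered to the randomly dismissed elements; any slack there would cost a constant factor in the approximation ratio.
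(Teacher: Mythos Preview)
Your proposal is correct and follows essentially the same route as the paper: decompose $f(A\cup OPT)$ via submodularity and Corollary~\ref{cor:sets_ratio}, apply Proposition~\ref{prop:mapping} to $T=OPT\setminus R$ using the $p$ vs.\ $p-1$ multiplicity split and the exact-size-$p$ images on $T\cap A$, assemble the deterministic inequality with the $-(1+c)p\sum_{OPT\cap A}$ term, and then take expectations so that the coin-flip identity $\tfrac{1-q}{q}=(1+c)p$ cancels the $R$-term against the $A$-term. Your bound $\sum_{u_j\in A\setminus S_n} f(u_j:S_{d(j)-1})\le f(S_n)/c$ via the partition $\{U_i\}$ and Inequality~\eqref{eq:single_element} is slightly more direct than the paper's (which first compares to $f(u_j:S_n)$ and then invokes Lemma~\ref{lem:marginals_sum}), but this is a cosmetic difference only.
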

\begin{proof}
Since $S_i \subseteq A$ for every $0 \leq i \leq n$, the submodularity of $f$ guarantees that
\begin{align*}
	f(A \cup OPT)
	\leq{} &
	f(A) + \sum_{u_i \in OPT \setminus (R \cup A)}  \mspace{-36mu} f(u_i \mid A) + \sum_{u_i \in (OPT \setminus A) \cap R} \mspace{-18mu} f(u_i \mid A) \\
	\leq{} &
	f(A) + \sum_{u_i \in OPT \setminus (R \cup A)}  \mspace{-36mu} f(u_i \mid S_{i-1}) + \sum_{u_i \in (OPT \setminus A) \cap R} \mspace{-18mu} f(u_i \mid S_{i - 1}) \\
	\leq{} &
	\frac{1+c}{c} \cdot f(S_n) + \sum_{u_i \in OPT \setminus (R \cup A)} \mspace{-36mu} f(u_i \mid S_{i-1})  + \sum_{u_i \in OPT \cap R} \mspace{-18mu} f(u_i \mid S_{i-1})
	\enspace,
\end{align*}
where the third inequality follows from Corollary~\ref{cor:sets_ratio} and the fact that $A \cap R = \varnothing$ by Observation~\ref{obs:set_properties}. Let us now consider the function $\phi_{OPT \setminus R}$ whose existence is guaranteed by Proposition~\ref{prop:mapping} when we choose $T = OPT \setminus R$. Then, the property guaranteed by Proposition~\ref{prop:mapping} for elements of $T \setminus A$ implies
\[
	\sum_{u_i \in OPT \setminus (R \cup A)} \mspace{-27mu} f(u_i \mid S_{i-1})
	\leq
	(1 + c ) \cdot \mspace{-18mu} \sum_{\substack{u_i \in OPT \setminus (R \cup A)\\u_j \in \phi_{OPT \setminus R}(u_i)}} \mspace{-27mu} f(u_j:S_{d(j) - 1} )
	\enspace.
\]
Additionally,
\begin{align*}
	&
	\sum_{\substack{u_i \in OPT \setminus (R \cup A)\\u_j \in \phi_{OPT \setminus R}(u_i)}} \mspace{-36mu} f(u_j:S_{d(j) - 1} ) + p \cdot \mspace{-18mu} \sum_{u_i \in OPT \cap A} \mspace{-18mu} f(u_i \mid S_{i-1})
	\leq
	\mspace{-18mu} \sum_{\substack{u_i \in OPT \setminus R\\u_j \in \phi_{OPT \setminus R}(u_i)}} \mspace{-36mu} f(u_j: S_{d(j)- 1} )\\
	\leq{} &
	p \cdot \sum_{u_j \in S_n} f(u_j:S_n) + (p-1) \cdot \mspace{-9mu} \sum_{u_j \in A \setminus S_n} \mspace{-9mu} f(u_j: S_{d(j) - 1})\\
	\leq{} &
	p \cdot f(S_n) + \frac{p-1}{c} \cdot f(S_n)
	=
	\frac{(1 + c) \cdot p - 1}{c}  \cdot f(S_n)
	\enspace,
\end{align*}
where the first inequality follows from the properties guaranteed by Proposition~\ref{prop:mapping} for elements of $T \cap A$ (note that the sets $OPT \setminus (R \cup A)$ and $OPT \cap A$ are a disjoint partition of $OPT \setminus R$ by Observation~\ref{obs:set_properties}) and the second inequality follows from the properties guaranteed by Proposition~\ref{prop:mapping} for elements of $A \setminus S_n$ and $S_n$ because every element $u_i$ in the multisets produced by $\phi_{OPT \setminus R}$ belongs to $A$, and thus, obeys $f(u_i : S_{d(i) - 1}) \geq 0$ by Observation~\ref{obs:set_properties}. Finally, the last inequality follows from Lemma~\ref{lem:marginals_sum} and the fact that $f(u_j : S_{d(j) - 1}) \leq f(u_j : S_n)$ for every $1 \leq j \leq n$. Combining all the above inequalities, we get
\begin{align} \label{eq:cup_bound}
	f(A&{} \cup OPT)
	\leq
	\frac{1 + c}{c} \cdot f(S_n) + \nonumber \\
	& (1 + c) \cdot \left[  \frac{(1 + c) \cdot p - 1}{c}  \cdot f(S_n) - p \cdot \mspace{-9mu} \sum_{u_i \in OPT \cap A} \mspace{-18mu} f(u_i \mid S_{i-1}) \right] + \sum_{u_i \in OPT \cap R} \mspace{-18mu} f(u_i \mid S_{i-1}) \nonumber \\
	={} &
	\frac{(1 + c)^2\cdot p}{c} \cdot f(S_n) - (1 + c)p \cdot \mspace{-9mu} \sum_{u_i \in OPT \cap A} \mspace{-18mu} f(u_i \mid S_{i-1}) + \sum_{u_i \in OPT \cap R} \mspace{-18mu} f(u_i \mid S_{i-1})
	\enspace.
\end{align}

By the linearity of expectation, to prove the theorem it only remains to show that the expectations of the last two terms on the rightmost hand side of Inequality~\eqref{eq:cup_bound} are equal. This is our objective in the rest of this proof. Consider an arbitrary element $u_i \in OPT$. When $u_i$ arrives, one of two things happens. The first option is that Algorithm~\ref{alg:analysis} discards $u_i$ without adding it to either its solution or to $R$. The other option is that Algorithm~\ref{alg:analysis} adds $u_i$ to its solution (and thus, to $A$) with probability $q$, and to $R$ with probability $1 - q$. The crucial observation here is that at the time of $u_i$'s arrival the set $S_{i - 1}$ is already determined, and thus, this set is independent of the decision of the algorithm to add $u$ to $A$ or to $R$; which implies the following equality (given an event $\cE$, we use here $\characteristic[\cE]$ to denote an indicator for it).
\[
	\frac{\bE[\characteristic[u_i \in A] \cdot f(u_i \mid S_{i - 1})]}{q}
	=
	\frac{\bE[\characteristic[u_i \in R] \cdot f(u_i \mid S_{i - 1})]}{1 - q}
	\enspace.
\]
Rearranging the last equality, and summing it up over all elements $u_i \in OPT$, we get
\[
	\frac{1 - q}{q} \cdot \bE\left[\sum_{u_i \in OPT \cap A_n} \mspace{-27mu} f(u_i \mid S_{i - 1})\right]
	=
	\bE\left[\sum_{u_i \in OPT \cap R} \mspace{-18mu} f(u_i \mid S_{i - 1})\right]
	\enspace.
\]
Recall that we assume $q^{-1} = (c + 1)p + 1$, which implies $(1 - q)/q = q^{-1} - 1 = (c + 1)p$. Plugging this equality into the previous one completes the proof that the expectations of the last two terms on the rightmost hand side of Inequality~\eqref{eq:cup_bound} are equal.
\end{proof}

Proving our result for monotone functions (Theorem~\ref{thm:monotone}) is now straightforward.
\begin{proof}[Proof of Theorem~\ref{thm:monotone}]
By plugging $c = 1$ and $q^{-1} = 2p + 1$ into Algorithm~\ref{alg:actual}, we get an algorithm which uses $O(k)$ memory and $O(km/p)$ oracle queries by Observation~\ref{obs:complexity}. Additionally, by Theorem~\ref{thm:raw}, this algorithm obeys
\[
	\bE[f(S_n)]
	\geq
	\frac{c}{(1 + c)^2p} \cdot \bE[f(A \cup OPT)]
	=
	\frac{1}{4p} \cdot \bE[f(A \cup OPT)]
	\geq
	\frac{1}{4p} \cdot f(OPT)
	\enspace,
\]
where the second inequality follows from the monotonicity of $f$. Thus, the approximation ratio of the algorithm we got is at most $4p$.
\end{proof}

Proving our result for non-monotone functions is a bit more involved. First, we need the following known lemma.
\begin{lemma}[Lemma 2.2 of~\citep{BFNS14}] \label{lem:probability_bound}
Let $g\colon 2^\cN \to \nnR$ be a non-negative submodular function, and let $B$ be a random subset of $\cN$ containing every element of $\cN$ with probability at most $q$ (not necessarily independently), then
$\bE[g(B)] \geq (1 - q) \cdot g(\varnothing)$.
\end{lemma}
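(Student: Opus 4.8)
The plan is to bound $\bE[g(B)]$ from below by the Lov\'asz extension of $g$ evaluated at the vector of marginal inclusion probabilities of $B$, and then to exploit the fact that all these marginals are at most $q$. The case $q \geq 1$ is trivial because $g$ is non-negative (then $(1-q)g(\varnothing) \leq 0 \leq \bE[g(B)]$), so assume $q < 1$. For each $u \in \cN$ I would set $x_u = \Pr[u \in B] \leq q$ and let $x = (x_u)_{u \in \cN} \in [0,1]^\cN$, so that $x$ is precisely the expectation of the indicator vector $\characteristic_B$. The law of $B$ is one particular distribution over subsets of $\cN$ whose indicator vector has expectation $x$; hence $\bE[g(B)]$ is at least the minimum of $\bE[g(S)]$ over \emph{all} such distributions, which is by definition the convex closure $\hat g(x)$ of $g$. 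Since $g$ is submodular, a classical result of Lov\'asz identifies this convex closure with the Lov\'asz extension, which has the closed form
\[
	\hat g(x) = \bE_{\lambda}\!\left[\, g\big(\{u \in \cN : x_u > \lambda\}\big) \,\right]
	\enspace,
\]
with $\lambda$ drawn uniformly from $[0,1]$; combining, $\bE[g(B)] \geq \hat g(x)$.

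It then remains to evaluate the right-hand side. Because every coordinate of $x$ is at most $q$, the threshold set $\{u \in \cN : x_u > \lambda\}$ is empty whenever $\lambda \geq q$, so $g$ takes the value $g(\varnothing)$ on that range of $\lambda$; for $\lambda < q$ the set may be nonempty, but $g$ is still non-negative on it. Splitting the integral over $\lambda$ at $q$ therefore gives
\[
	\hat g(x)
	=
	\int_0^q g\big(\{u : x_u > \lambda\}\big)\, d\lambda + \int_q^1 g(\varnothing)\, d\lambda
	\geq
	0 + (1 - q)\cdot g(\varnothing)
	\enspace,
\]
which is exactly the claimed bound.

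I do not expect a genuine obstacle once this route is chosen; the point worth flagging is why the obvious alternatives fail, since that is what motivates the route. Inducting on $|\cN|$ and conditioning on whether a single fixed element lies in $B$ breaks down, because conditioning on the (non)membership of that element can push the conditional probability of a remaining element above $q$, destroying the inductive hypothesis. Passing through the convex closure avoids this entirely: that lower bound depends only on the marginals $x_u = \Pr[u \in B]$ and holds for an arbitrary --- possibly highly correlated --- joint distribution of $\characteristic_B$, which is exactly the generality the lemma demands. If one prefers to keep the argument self-contained, the same proof can be presented by exhibiting the explicit ``threshold coupling'' $T_\lambda = \{u : x_u > \lambda\}$ with $\lambda$ uniform on $[0,1]$ (which again satisfies $\bE[\characteristic_{T_\lambda}] = x$) and invoking the standard inequality $\bE_\nu[g] \geq \bE_\lambda[g(T_\lambda)]$, valid for every distribution $\nu$ with $\bE_\nu[\characteristic] = x$ whenever $g$ is submodular.
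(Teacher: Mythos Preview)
The paper does not prove this lemma; it is quoted as Lemma~2.2 of \citep{BFNS14} and used as a black box, so there is no in-paper argument to compare against. Your proof via the convex closure / Lov\'asz extension is correct and is essentially the standard proof of the cited result: the law of $B$ is one feasible distribution with marginal vector $x=(\Pr[u\in B])_{u\in\cN}$, hence $\bE[g(B)]\ge g^-(x)$ by the very definition of the convex closure; for submodular $g$ the convex closure $g^-$ coincides with the Lov\'asz extension and admits the threshold-integral representation you wrote (this identity is valid even when $g(\varnothing)\neq 0$); and since every $x_u\le q$, the integrand equals $g(\varnothing)$ on $[q,1]$ and is non-negative on $[0,q)$, giving the bound.
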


The proof of Theorem~\ref{thm:non_monotone} is now very similar to the above presented proof of Theorem~\ref{thm:monotone}, except that slightly different values for $c$ and $q$ are used, and in addition, Lemma~\ref{lem:probability_bound} is now used to lower bound $\bE[f(A \cup OPT)]$ instead of the monotonicity of the objective that was used for that purpose in the proof of Theorem~\ref{thm:monotone}. A more detailed presentation of this proof is given below.

\begin{proof}[Proof of Theorem~\ref{thm:non_monotone}]
	By plugging $c = \sqrt{1 + 1/p}$ and $q^{-1} = p + \sqrt{p(p+1)} + 1$ into Algorithm~\ref{alg:actual}, we get an algorithm which uses $O(k)$ memory and $O(km/p)$ oracle queries by Observation~\ref{obs:complexity}. Additionally, by Theorem~\ref{thm:raw}, this algorithm obeys
	\[
	\bE[f(S_n)]
	\geq
	\frac{c}{(1 + c)^2p} \cdot \bE[f(A \cup OPT)]
	\enspace.
	\]
	Let us now define $g\colon 2^\cN \to \nnR$ to be the function $g(S) = f(S \cup OPT)$. Note that $g$ is non-negative and submodular. Thus, by Lemma~\ref{lem:probability_bound} and the fact that $A$ contains every element with probability at most $q$ (because Algorithm~\ref{alg:actual} accepts an element into its solution with at most this probability), we get
	\begin{align*}
	\bE[f(A \cup OPT)]
	={} &
	\bE[g(A)]
	\geq
	(1 - q) \cdot g(\varnothing)\\
	={} &
	(1 - q) \cdot f(OPT)
	=
	\frac{p + \sqrt{p(p+1)}}{p + \sqrt{p(p+1)} + 1} \cdot f(OPT)\\
	={} &
	\frac{p + \sqrt{p(p+1)}}{\sqrt{1 + 1/p} \cdot (p + \sqrt{p(p+1)})} \cdot f(OPT)
	=
	\frac{1}{c} \cdot f(OPT)
	\enspace.
	\end{align*}
	Combining the two above inequalities, we get
	\begin{align*}
	\bE[f(S_n)]
	\geq{} &
	\frac{1}{(1 + c)^2p} \cdot f(OPT)
	=
	\frac{1}{(2 + 2\sqrt{1 + 1/p} + 1/p)p} \cdot f(OPT)\\
	={} &
	\frac{1}{2p + 2\sqrt{p(p + 1)} + 1} \cdot f(OPT)
	\end{align*}
	Thus, the approximation ratio of the algorithm we got is at most $2p + 2\sqrt{p(p + 1)} + 1$.
\end{proof}

\subsection{Proof of Proposition~\ref{prop:mapping}} 
\label{ssc:mapping}
In this section we prove Propsition~\ref{prop:mapping}. Let us first restate the proposition itself.
\begin{repproposition}{prop:mapping}
\propMapping
\end{repproposition}

We begin the proof of Proposition~\ref{prop:mapping} by constructing $m$ graphs, one for every one of the matroids defining $\cM$. For every $1 \leq \ell \leq m$, the graph $G_\ell$ contains two types of vertices: its internal vertices are the elements of $A \cap \cN_\ell$, and its external vertices are the elements of $\{u_i \in \cN_\ell \setminus (R \cup A) \mid (S_{i-1} + u_i) \cap \cN_\ell \not \in \cI_\ell\}$. Informally, the external elements of $G_\ell$ are the elements of $\cN_\ell$ which were rejected upon arrival by Algorithm~\ref{alg:analysis} and the matroid $\cM_\ell = (\cN_\ell, \cI_\ell)$ can be (partially) blamed for this rejection. The arcs of $G_\ell$ are created using the following iterative process that creates some arcs of $G_\ell$ in response to every arriving element. For every $1 \leq i \leq n$, consider the element $x_\ell$ selected by the execution of {\ExchangeAlg} on the element $u_i$ and the set $S_{i-1}$. From this point on we denote this element by $x_{i, \ell}$. If no $x_{i, \ell}$ element was selected by the above execution of {\ExchangeAlg}, or $u_i \in R$, then no $G_\ell$ arcs are created in response to $u_i$. Otherwise, let $C_{i, \ell}$ be the single cycle of the matroid $\cM_\ell$ in the set $(S_{i-1} + u_i) \cap \cN_\ell$---there is exactly one cycle of $\cM_\ell$ in this set because $S_{i-1}$ is independent, but $(S_{i-1} + u_i) \cap \cN_\ell$ is not independent in $\cM_\ell$. One can observe that $C_{i, \ell} - u_i$ is equal to the set $X_\ell$ in the above mentioned execution of {\ExchangeAlg}, and thus, $x_{i, \ell} \in C_{i, \ell}$. We now denote by $u'_{i, \ell}$ the vertex out of $\{u_i, x_{i, \ell}\}$ that does not belong to $S_i$---notice that there is exactly one such vertex since $x_{i, \ell} \in U_i$, which implies that it appears in $S_i$ if $S_i = S_{i-1}$ and does not appear in $S_i$ if $S_i = S_{i-1} \setminus U_i + u_i$. Regardless of the node chosen as $u'_i$, the arcs of $G_\ell$ created in response to $u_i$ are all the possible arcs from $u'_{i, \ell}$ to the other vertices of $C_{i, \ell}$. Observe that these are valid arcs for $G_\ell$ in the sense that their end points (\ie, the elements of $C_{i, \ell}$) are all vertices of $G_\ell$---for the elements of $C_{i, \ell} - u_i$ this is true since $C_{i, \ell} - u_i \subseteq S_{i-1} \cap \cN_\ell \subseteq A \cap \cN_\ell$, and for the element $u_i$ this is true since the existence of $x_{i, \ell}$ implies $(S_{i-1} + u_i) \cap \cN_\ell \not \in \cI_\ell$.

Some properties of $G_\ell$ are given by the following observation. Given a graph $G$ and a vertex $u$, we denote by $\delta^+_G(u)$ the set of vertices to which there is a direct arc from $u$ in $G$.
\newcommand{\obsGraphProperties}{%
For every $1 \leq \ell \leq m$,
\begin{itemize}
	\item every non-sink vertex $u$ of $G_\ell$ is spanned by the set $\delta^+_{G_\ell}(u)$.
	\item for every two indexes $1 \leq i, j \leq n$, if $u'_{i, \ell}$ and $u'_{j, \ell}$ both exist and $i \neq j$, then $u'_{i, \ell} \neq u'_{j, \ell}$.
	\item $G_\ell$ is a directed acyclic graph.
\end{itemize}}
\begin{observation} \label{obs:graph_properties}
\obsGraphProperties
\end{observation}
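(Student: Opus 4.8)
The plan is to verify each of the three bullets directly from the way the arcs of $G_\ell$ were constructed in response to arriving elements. For the first bullet, fix a non-sink vertex $u$ of $G_\ell$; since $u$ is not a sink it has at least one outgoing arc, so there is an index $i$ with $u = u'_{i,\ell}$ and $u \in C_{i,\ell}$, and the outgoing arcs from $u$ created in response to $u_i$ go to all the other vertices of $C_{i,\ell}$, i.e.\ to $C_{i,\ell} - u$. Hence $C_{i,\ell} - u \subseteq \delta^+_{G_\ell}(u)$. Since $C_{i,\ell}$ is a cycle of $\cM_\ell$, every element of it — in particular $u$ — is spanned by $C_{i,\ell} - u$ (this is exactly the fact noted in the preliminaries that every element of a cycle is spanned by the rest of the cycle). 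Spanning is monotone under taking supersets, so $u$ is spanned by $\delta^+_{G_\ell}(u)$ as well. The only subtlety here is that different arriving elements may add further outgoing arcs from $u$, but that can only enlarge $\delta^+_{G_\ell}(u)$, which preserves the spanning property.

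For the second bullet, I would argue that $u'_{i,\ell}$ is precisely the unique vertex of $\{u_i, x_{i,\ell}\}$ that fails to lie in $S_i$; in other words, $u'_{i,\ell}$ is an element that gets \emph{removed} from (or \emph{refused entry} to) the running solution in response to $u_i$. More precisely: if $u_i$ is accepted, then $u'_{i,\ell} = x_{i,\ell} \in U_i$ is removed from the solution at step $i$; if $u_i$ is rejected (by the threshold test — recall we are in the case $u_i \notin R$), then $u'_{i,\ell} = u_i$, and $u_i$ never enters the solution at all. An element can be removed from the solution at most once (once removed, it is gone forever, since Algorithm~\ref{alg:analysis} only ever adds an element at the moment of its arrival), and an element $u_i$ is "refused entry at step $i$" for only one value of $i$, namely its own arrival index. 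So the map $i \mapsto u'_{i,\ell}$ is injective on the set of indices for which it is defined, which is exactly the claim. I should be a little careful to note that the "removed once" and "refused entry once" cases do not collide with each other either: an element that is refused entry at its arrival index is never in $A$, hence never in the solution, hence never removed.

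For the third bullet, acyclicity, I would use a potential/time argument keyed to $d(\cdot)$ (or equivalently to the order in which vertices leave the solution). Whenever an arc is created from $u'_{i,\ell}$ to some $w \in C_{i,\ell} - u'_{i,\ell}$, the element $u'_{i,\ell}$ is leaving (or being kept out of) the solution at "time $i$", while $w$ is an element of $S_{i-1}$ that survives step $i$ — so $w$ is still in the solution strictly after the moment $u'_{i,\ell}$ left it. Concretely, one checks $d(\text{index of } w) \geq i+1 > i \geq d(\text{index of } u'_{i,\ell})$ — here if $u'_{i,\ell} = x_{i,\ell}$ then its $d$-value is $i$ by definition of $d$, and if $u'_{i,\ell} = u_i \notin A$ then its $d$-value is also $i$. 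Thus every arc strictly increases the quantity $d(\cdot)$ of the endpoint (with the convention that external vertices $u_j \notin A$ get $d(j) = j$, which is consistent), so there can be no directed cycle. The main thing to get right is the bookkeeping in this last part: one must be sure the inequality $d(w\text{'s index}) > i$ genuinely holds, i.e.\ that $w$ remains in the solution past step $i$, which is immediate because $w \in C_{i,\ell} - u_i \subseteq S_{i-1}$ and $w \notin U_i$ (as $U_i \cap (C_{i,\ell} - u_i) = \{x_{i,\ell}\}$ and $w \neq x_{i,\ell}$ would be needed — actually $w$ could equal some other $x_{j,\ell'}$, but within matroid $\ell$, $U_i \cap \cN_\ell \cap C_{i,\ell} = \{x_{i,\ell}\}$, so $w$ is not removed at step $i$), hence $w \in S_i$ and $d(w\text{'s index}) \geq i+1$. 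I expect this acyclicity argument — pinning down the right monotone quantity and handling the internal/external vertex cases uniformly — to be the part that needs the most care; the first two bullets are essentially immediate from the construction.
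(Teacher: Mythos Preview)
Your arguments for the first two bullets are correct and essentially match the paper's (the paper proves the second bullet via a slightly different route, first showing the technical claim that $u'_{i,\ell}\notin C_{j,\ell}$ whenever $i<j$, but the content is the same).

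For the third bullet there is a genuine gap. Your key assertion is that every arc of $G_\ell$ strictly increases the potential $d(\cdot)$, and in particular that any target vertex $w\in C_{i,\ell}-u'_{i,\ell}$ with $w\neq u_i$ satisfies $w\in S_i$. To justify $w\notin U_i$ you claim that $U_i\cap \cN_\ell\cap C_{i,\ell}=\{x_{i,\ell}\}$. This is false in general: nothing prevents an element $x_{i,\ell'}$ for some $\ell'\neq\ell$ from also lying in $C_{i,\ell}$ --- it merely has to belong to both $\cN_\ell$ and $\cN_{\ell'}$, which is exactly what happens for elements lying in several of the ground sets. In that situation, when $u_i$ is accepted, there is an arc in $G_\ell$ from $x_{i,\ell}$ to $w=x_{i,\ell'}$, yet $w\in U_i$ is removed at step $i$ and both endpoints have $d$-value exactly $i$; your potential does not strictly increase along this arc.

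The repair is short: such a $w$ is necessarily a \emph{sink} of $G_\ell$. Indeed, $w$ has outgoing arcs in $G_\ell$ only if $w=u'_{h,\ell}$ for some $h$; since $w\in S_{i-1}\subseteq A$ this forces $w=x_{h,\ell}$ with $u_h$ accepted, and since $w$ is removed from the solution exactly once (at step $i$) we get $h=i$, contradicting $w\neq x_{i,\ell}$. Hence every arc of $G_\ell$ either strictly increases $d(\cdot)$ or terminates at a sink, and acyclicity follows. The paper sidesteps the issue by using a different monotone quantity: having established that $u'_{i,\ell}\notin C_{j,\ell}$ for $i<j$, it notes that in any putative directed cycle the vertex $u'_{i,\ell}$ with maximal index $i$ must send an arc (necessarily created at step $i$, by the second bullet) to some $u'_{j,\ell}$ with $j<i$, forcing $u'_{j,\ell}\in C_{i,\ell}$ in contradiction to the technical claim.
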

\begin{proof}
Consider an arbitrary non-sink node $u$ of $G_\ell$. Since there are arcs leaving $u$, $u$ must be equal to $u'_{i, \ell}$ for some $1 \leq i \leq n$. This implies that $u$ belongs to the cycle $C_{i, \ell}$, and that there are arcs from $u$ to every other vertex of $C_{i, \ell}$. Thus, $u$ is spanned by the vertices of $\delta^+_{G_\ell}(u) \supseteq C_{i, \ell} - u$ because the fact that $C_{i, \ell}$ is a cycle containing $u$ implies that $C_{i, \ell} - u$ spans $u$. This completes the proof of the first part of the observation.

Let us prove now a very useful technical claim. Consider an index $1 \leq i \leq n$ such that $u'_i$ exists, and let $j$ be an arbitrary value $i < j \leq n$. We will prove that $u'_i$ does not belong to $C_{j, \ell}$. By definition, $u'_i$ is either $u_i$ or the vertex $x_{i, \ell}$ that belongs to $S_{i - 1}$, and thus, arrived before $u_i$ and is not equal to $u_j$; hence, in neither case $u'_i \neq u_j$. Moreover, combining the fact that $u'_i$ is either $u_i$ or arrived before $u_i$ and the observation that $u'_i$ is never a part of $S_i$, we get that $u'_i$ cannot belong to $S_j \supseteq C_{j, \ell} - u_j$, which implies the claim together with out previous observation that $u'_i \neq u_j$.

The technical claim that we proved above implies the second part of the lemma, namely that for every two indexes $1 \leq i, j \leq n$, if $u'_{i, \ell}$ and $u'_{j, \ell}$ both exist and $i \neq j$, then $u'_{i, \ell} \neq u'_{j, \ell}$. To see why that is the case, assume without loss of generality $i < j$. Then, the above technical claim implies that $u'_{i, \ell} \not \in C_{j, \ell}$, which implies $u'_{i, \ell} \neq u'_{j, \ell}$ because $u'_{j, \ell} \in C_{j, \ell}$.

At this point, let us assume towards a contradiction that the third part of the observation is not true, \ie, that there exists a cycle $L$ in $G_\ell$. Since every vertex of $L$ has a non-zero out degree, every such vertex must be equal to $u'_{i, \ell}$ for some $1 \leq i \leq n$. Thus, there must be indexes $1 \leq i_1 < i_2 \leq n$ such that $L$ contains an arc from $u'_{i_2,\ell}$ to $u'_{i_1, \ell}$. Since we already proved that $u'_{i_2,\ell}$ cannot be equal to $u'_{j, \ell}$ for any $j \neq i_2$, the arc from $u'_{i_2,\ell}$ to $u'_{i_1, \ell}$ must have been created in response to $u_{i_2}$, hence, $u'_{i_1, \ell} \in C_{i_2, \ell}$, which contradicts the technical claim we have proved.
%
%
\end{proof}

One consequence of the properties of $G_\ell$ proved by the last observation is given by the following lemma. A slightly weaker version of this lemma was proved implicitly by~\cite{V11}, and was stated as an explicit lemma by~\cite{CGQ15}.

\newcommand{\lemFunctionGraph}{%
Consider an arbitrary directed acyclic graph $G = (V, E)$ whose vertices are elements of some matroid $\cM'$. If every non-sink vertex $u$ of $G$ is spanned by $\delta^+_G(u)$ in $\cM'$, then for every set $S$ of vertices of $G$ which is independent in $\cM'$ there must exist an injective function $\psi_S$ such that, for every vertex $u \in S$, $\psi_S(u)$ is a sink of $G$ which is reachable from $u$.}
\begin{lemma} \label{lem:function_graph}
\lemFunctionGraph
\end{lemma}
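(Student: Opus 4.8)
The plan is to build the injective function $\psi_S$ by a matroid-augmentation argument that follows arcs of $G$ from the vertices of $S$ towards sinks, maintaining along the way an independent "frontier" set that we gradually push forward. Concretely, I would process the DAG $G$ in reverse topological order and, for each vertex currently in our working set that is not yet a sink, replace it by an appropriate element of its out-neighborhood; the key invariant to preserve is that the working set stays independent in $\cM'$ and has the same size as $S$, so that when the process terminates (all vertices are sinks) we have exactly $|S|$ distinct sinks, each reachable from a distinct element of $S$.

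Here is the argument in more detail. Start with $W_0 = S$, which is independent by hypothesis, and associate to every vertex $w \in W_0$ the trivial reachability witness $w$ itself. As long as the current set $W_t$ contains a non-sink vertex $u$, we want to "advance" $u$: since $u$ is spanned by $\delta^+_G(u)$, the rank of $\delta^+_G(u)$ equals the rank of $\delta^+_G(u) + u$, hence also the rank of $(W_t - u) \cup \delta^+_G(u)$ equals the rank of $W_t \cup \delta^+_G(u) \geq |W_t|$ (using submodularity/monotonicity of the rank function, since $W_t - u$ together with a spanning set of $u$ spans all of $W_t$). Therefore the set $(W_t - u) \cup \delta^+_G(u)$ has an independent subset of size $|W_t|$ that contains $W_t - u$; by the matroid exchange property applied to $W_t - u$ (which is independent, of size $|W_t| - 1$) and this larger independent set, there is an element $v \in \delta^+_G(u) \setminus (W_t - u)$ with $W_t - u + v$ independent. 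Set $W_{t+1} = W_t - u + v$, and transfer $u$'s reachability witness to $v$ (if $v$ was already in $W_t$, i.e. $v \in W_t - u$ — but we just excluded that — so $v$ is genuinely new). Since $v$ is an out-neighbor of $u$, and $u$ was reachable from some $s \in S$, $v$ is also reachable from $s$. The set $W_{t+1}$ is independent and still has size $|S|$.

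For termination I would use a potential argument: assign to each vertex its position in a fixed topological order so that arcs always go from lower to higher position (exists because $G$ is a DAG); each advance step strictly increases the multiset of positions of $W_t$ in a well-founded order (we remove $u$ and add a strictly later vertex $v$), so the process must stop, and it can only stop when every vertex of the working set is a sink. Let $W^* = W_{|S| \text{ advances or fewer}}$ be the final set; it consists of $|S|$ distinct sinks, and the accumulated bookkeeping gives, for each $s \in S$, a sink $\psi_S(s) \in W^*$ reachable from $s$. Distinctness of the $\psi_S(s)$ across $s \in S$ follows because $W^*$ has exactly $|S|$ elements and the map $s \mapsto \psi_S(s)$ is onto $W^*$ with $|S|$ sources — more carefully, I would maintain throughout that the witness map from $S$ to $W_t$ is a bijection, which is preserved at each step since we swap one witness from $u$ to the fresh element $v$. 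Hence $\psi_S$ is injective, completing the proof.

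The main obstacle I expect is getting the rank inequality that justifies the exchange step stated cleanly: one must argue that $(W_t - u) \cup \delta^+_G(u)$ has rank at least $|W_t|$, which relies on "$\delta^+_G(u)$ spans $u$" upgrading to "$(W_t - u) \cup \delta^+_G(u)$ spans $W_t$", and then invoke the augmentation axiom to find the new element inside $\delta^+_G(u)$ specifically (not just anywhere). A secondary subtlety is ensuring the swapped-in vertex $v$ is genuinely new to the working set so that the bijective witness map is maintained and the topological-position potential strictly decreases; this needs the choice of $v$ from $\delta^+_G(u) \setminus W_t$, which the augmentation axiom indeed permits since we augment the independent set $W_t - u$ using elements of the larger independent set, and any such element lies outside $W_t - u$. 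I do not anticipate difficulty with acyclicity or reachability, which are used only through the topological order and the obvious transitivity of "reachable from".
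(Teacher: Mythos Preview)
Your proof is correct and follows essentially the same exchange-and-advance strategy as the paper: repeatedly replace a non-sink vertex $u$ of the current independent set by an out-neighbor $v$ so that independence is preserved, and iterate until only sinks remain. The paper packages this as an induction on a ``width'' potential (the number of arcs lying on some path from the current set), which forces it to choose $u$ with no incoming path from other vertices of the set; your sum-of-topological-positions potential is a slight simplification that lets $u$ be any non-sink, but the core argument is identical.
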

\begin{proof}
Let us define the \emph{width} of a set $S$ of vertices of $G$ as the number of arcs that appear on some path starting at a vertex of $S$ (more formally, the width of $S$ is the size of the set $\{e \in E \mid \text{there is a path in $G$ that starts in a vertex of $S$ and includes $e$}\}$). We prove the lemma by induction of the width of $S$. First, consider the case that $S$ is of width $0$. In this case, the vertices of $S$ cannot have any outgoing arcs because such arcs would have contributed to the width of $S$, and thus, they are all sinks of $G$. Thus, the lemma holds for the trivial function $\psi_S$ mapping every element of $S$ to itself. Assume now that the width $w$ of $S$ is larger than $0$, and assume that the lemma holds for every set of width smaller than $w$. Let $u$ be a non-sink vertex of $S$ such that there is no path in $G$ from any other vertex of $S$ to $u$. Notice that such a vertex must exist since $G$ is acyclic. By the assumption of the lemma, $\delta^+(u)$ spans $u$. In contrast, since $S$ is independent, $S - u$ does not span $u$, and thus, there must exist an element $v \in \delta^+(u) \setminus S$ such that the set $S' = S - u + v$ is independent.

Let us explain why the width of $S'$ must be strictly smaller than the width of $S$. First, consider an arbitrary arc $e$ which is on a path starting at a vertex $u' \in S'$. If $u' \in S$, then $e$ is also on a path starting in a vertex of $S$. On the other hand, if $u' \not \in S$, then $u'$ must be the vertex $v$. Thus, $e$ must be on a path $P$ starting in $v$. Adding $uv$ to the beginning of the path $P$, we get a path from $u$ which includes $e$. Hence, in conclusion, we have got that every arc $e$ which appears on a path starting in a vertex of $S'$  (and thus, contributes to the width of $S'$) also appears on a path starting in a vertex of $S$ (and thus, also contributes to the width of $S$); which implies that the width of $S'$ is not larger than the width of $S$. To see that the width of $S'$ is actually strictly smaller than the width of $S$, it only remains to find an arc which contributes to the width of $S$, but not to the width of $S'$. Towards this goal, consider the arc $uv$. Since $u$ is a vertex of $S$, the arc $uv$ must be on some path starting in $u$ (for example, the path including only this arc), and thus, contributes to the width of $S$. Assume now towards a contradiction that $uv$ contributes also to the width of $S'$, \ie, that there is a path $P$ starting at a vertex $w \in S'$ which includes $uv$. If $w = v$, then this leads to a contradiction since it implies the existence of a cycle in $G$. On the other hand, if $w \neq v$, then this implies a path in $G$ from a vertex $w \neq u$ of $S$ to $u$, which contradicts the definition of $u$. This completes the proof that the width of $S'$ is strictly smaller than the width of $S$.

Using the induction hypothesis, we now get that there exists an injective function $\psi_{S'}$ mapping every vertex of $S'$ to a sink of $G$. Using $\psi_{S'}$, we can define $\psi_S$ as follows. For every $w \in S$,
\[
	\psi_S(w) =
	\begin{cases}
		\psi_{S'}(v) & \text{if $w = u$} \enspace,\\
		\psi_{S'}(w) & \text{otherwise} \enspace.
	\end{cases}
\]
Since $u$ appears in $S$ but not in $S'$, and $v$ appears in $S'$ but not in $S$, the injectiveness of $\psi_S$ follows from the injectiveness of $\psi_{S'}$. Moreover, $\psi_S$ clearly maps every vertex of $S$ to a sink of $G$ since $\psi_{S'}$ maps every vertex of $S'$ to such a sink. Finally, one can observe that $\psi_S(w)$ is reachable from $w$ for every $w \in S$ because $\psi_S(u) = \psi_{S'}(v)$ is reachable from $v$ by the definition of $\psi_{S'}$, and thus, also from $u$ due to the existence of the arc $uv$.
\end{proof}

For every $1 \leq \ell \leq m$, let $T_\ell$ be the set of elements of $T$ that appear as vertices of $G_\ell$. Since $T$ is independent and $T_\ell$ contains only elements of $\cN_\ell$, Observation~\ref{obs:graph_properties} and Lemma~\ref{lem:function_graph} imply together the existence of an injective function $\psi_{T_\ell}$ mapping the elements of $T_\ell$ to sink vertices of $G_\ell$. We can now define the function $\phi_T$ promised by Proposition~\ref{prop:mapping}. For every element $u \in T$, the function $\phi_T$ maps $u$ to the multi-set $\{\psi_{T_\ell}(u) \mid 1 \leq \ell \leq m \text{ and } u \in T_\ell\}$, where we assume that repetitions are kept when the expression $\psi_{T_\ell}(u)$ evaluates to the same element for different choices of $\ell$. Let us explain why the elements in the multi-sets produced by $\phi_T$ are indeed all elements of $A$, as is required by the proposition. Consider an element $u_i \not \in A$, and let us show that it does not appear in the range of $\psi_{T_\ell}$ for any $1 \leq \ell \leq m$. If $u_i$ does not appear as a vertex in $G_\ell$, then this is obvious. Otherwise, the fact that $u_i \not \in A$ implies $u'_{i, \ell} = u_i$, and thus, the arcs of $G_\ell$ created in response to $u_i$ are arcs leaving $u_i$, which implies that $u_i$ is not a sink of $G_\ell$, and hence, does not appear in the range of $\psi_{T_\ell}$. 

Recall that every element $u \in \cN$ belongs to at most $p$ out of the ground sets $\cN_1, \cN_2, \dotsc,\allowbreak \cN_m$, and thus, is a vertex in at most $p$ out of the graphs $G_1, G_2, \dotsc, G_m$. Since $\psi_{T_\ell}$ maps every element to vertexes of $G_\ell$, this implies that $u$ is in the range of at most $p$ out of the functions $\psi_{T_1}, \psi_{T_2}, \dotsc, \psi_{T_m}$. Moreover, since these functions are injective, every one of these functions that have $u$ in its range maps at most one element to $u$. Thus, the multi-sets produced by $\phi_T$ contain $u$ at most $p$ times. Since this is true for every element of $\cN$, it is true in particular for the elements of $S_n$, which is the first property of $\phi_T$ that we needed to prove.

Consider now an element $u \in A \setminus S_n$. Our next objective is to prove that $u$ appears at most $p - 1$ times in the multi-sets produced by $\phi_T$, which is the second property of $\phi_T$ that we need to prove. Above, we proved that $u$ appears at most $p$ times in these multi-sets by arguing that every such appearance must be due to a function $\psi_{T_\ell}$ that has $u$ in its range, and that the function $\psi_{T_\ell}$ can have this property only for the $p$ values of $\ell$ for which $u \in \cN_\ell$. Thus, to prove that $u$ in fact appears only $p - 1$ times in the multi-sets produced by $\phi_T$, it is enough to argue that there exists a value $\ell$ such that $e \in \cN_\ell$, but $\psi_{T_\ell}$ does not have $u$ in its range. Let us prove that this follows from the membership of $u$ in $A \setminus S_n$. Since $u$ was removed from the solution of Algorithm~\ref{alg:analysis} at some point, there must be some index $1 \leq i \leq n$ such that both $u \in U_i$ and $u_i$ was added to the solution of Algorithm~\ref{alg:analysis}. Since $u \in U_i$, there must be a value $1 \leq \ell \leq m$ such that $u = x_{i, \ell}$, and since $u_i$ was added to the solution of Algorithm~\ref{alg:analysis}, $u'_{i, \ell} = x_{i, \ell}$. These equalities imply together that there are arcs leaving $u$ in $G_\ell$ (which were created in response to $u_i$). Thus, the function $\psi_{T_\ell}$ does not map any element to $u$ because $u$ is not a sink of $G_\ell$, despite the fact that $u \in \cN_\ell$.

To prove the other guaranteed properties of $\phi_T$, we need the following lemma.
\newcommand{\lemStepInternal}{%
Consider two vertices $u_i$ and $u_j$ such that $u_j$ is reachable from $u_i$ in $G_\ell$. If $u_i \in A$, then $f(u_i : S_{d(i) - 1}) \leq f(u_j : S_{d(j) - 1})$, otherwise, $f(x_{i, \ell} : S_{i-1}) \leq f(u_j : S_{d(j)-1})$.}
\begin{lemma} \label{lem:step_internal}
\lemStepInternal
\end{lemma}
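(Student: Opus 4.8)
The plan is to prove the inequality first for a single arc of $G_\ell$ and then obtain the general statement by telescoping along a directed path from $u_i$ to $u_j$. For a vertex $w$ of $G_\ell$, write $b(w)$ for the index with $w = u_{b(w)}$.

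\textbf{Single arc.} Fix an arc $w \to v$ of $G_\ell$, created in response to the arrival of some $u_a$; thus $w = u'_{a,\ell}$ and $v \in C_{a,\ell} - w$. Recall that $x_{a,\ell}$ minimizes $f(\cdot : S_{a-1})$ over $X_\ell = C_{a,\ell} - u_a$, and that $u'_{a,\ell}$ equals $x_{a,\ell}$ exactly when $u_a$ was accepted into the solution (hence $u_a \in A$) and equals $u_a$ otherwise. First I would argue that $f(v : S_{a-1}) \geq f(x_{a,\ell} : S_{a-1})$ in all cases: if $v \neq u_a$ then $v \in X_\ell$ and this is immediate from the minimality of $x_{a,\ell}$; if $v = u_a$ --- which forces $w = x_{a,\ell}$ and so $u_a \in A$ --- then the acceptance test $f(u_a \mid S_{a-1}) \geq (1 + c) f(U_a : S_{a-1})$, together with $x_{a,\ell} \in U_a$ and the fact that $f(x : S_{a-1}) \geq 0$ for every $x \in U_a \subseteq S_{a-1} \subseteq A$ (Observation~\ref{obs:set_properties}), gives $f(x_{a,\ell} : S_{a-1}) \leq f(U_a : S_{a-1}) \leq f(u_a \mid S_{a-1}) = f(u_a : S_{a-1})$. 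Next I would upgrade $f(v : S_{a-1})$ to $f(v : S_{d(b(v)) - 1})$: since $v \in A$ and $v$ lies in the solution at step $a - 1$ (or $v = u_a \in A$), we have $d(b(v)) \geq a$, and because an element that arrived before $v$ can only leave the solution as time passes, the set $S_j \cap \{u_1, \dots, u_{b(v) - 1}\}$ is non-increasing in $j$ for $j \geq b(v) - 1$, so submodularity gives $f(v : S_{a-1}) \leq f(v : S_{d(b(v)) - 1})$. Finally, when $w = x_{a,\ell}$ it is removed from the solution exactly at step $a$, so $d(b(w)) = a$ and $f(x_{a,\ell} : S_{a-1}) = f(w : S_{d(b(w)) - 1})$; and when $w = u_a$ we have $b(w) = a$ and $f(x_{a,\ell} : S_{a-1}) = f(x_{b(w), \ell} : S_{b(w) - 1})$. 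Combining the three steps, the arc $w \to v$ obeys $f(w : S_{d(b(w)) - 1}) \leq f(v : S_{d(b(v)) - 1})$ if $w \in A$, and $f(x_{b(w), \ell} : S_{b(w) - 1}) \leq f(v : S_{d(b(v)) - 1})$ if $w \notin A$.

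\textbf{Concatenation.} The key remaining observation is that every vertex of $G_\ell$ with an incoming arc lies in $A$: an arc (created in response to $u_a$) that ends at a vertex $\neq u_a$ lands inside $C_{a,\ell} - u_a \subseteq S_{a-1} \subseteq A$, while an arc that ends at $u_a$ itself exists only if $u'_{a,\ell} \neq u_a$, i.e., only if $u_a$ was accepted, so $u_a \in A$. Hence on any directed path $u_i = w_0 \to w_1 \to \dots \to w_t = u_j$, all of $w_1, \dots, w_t$ belong to $A$; applying the $A$-form of the single-arc bound to every arc $w_s \to w_{s+1}$ with $s \geq 1$, and the form matching the status of $u_i$ to $w_0 \to w_1$, and telescoping, yields the lemma. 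The degenerate path of length $0$ ($u_i = u_j$) gives an equality when $u_i \in A$ and does not otherwise arise in the way the lemma is applied (if $x_{i,\ell}$ is defined then $u_i$ has outgoing arcs, hence is not a sink).

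\textbf{Main obstacle.} I expect the delicate point to be the single-arc bound when $v = u_a$, where the greedy minimality of $x_{a,\ell}$ is unavailable and one must instead combine the acceptance threshold with the non-negativity of the shorthand marginals on $A$; and, throughout, the bookkeeping needed to replace marginals against $S_{a-1}$ by marginals against $S_{d(\cdot)-1}$, which rests on the identity $d(b(x_{a,\ell})) = a$ and the time-monotonicity of the restricted solution.
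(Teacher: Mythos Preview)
Your proposal is correct and follows essentially the same approach as the paper's proof: establish the inequality across a single arc (splitting into the cases $v \neq u_a$ via the minimality of $x_{a,\ell}$ and $v = u_a$ via the acceptance threshold combined with non-negativity on $A$), then chain along a directed path. The only organizational difference is that the paper phrases the structural fact as ``no arc of $G_\ell$ goes from an internal vertex to an external one'' and handles the $u_i \notin A$ case by peeling off the first arc separately, whereas you prove the slightly stronger (and equally easy) statement that every vertex with an incoming arc lies in $A$ and telescope uniformly; both routes are equivalent, and your handling of the degenerate length-$0$ path matches the paper's implicit assumption.
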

\begin{proof}
We begin by proving the special case of the lemma in which $u_i \in A$ (\ie, is an internal vertex of $G_\ell)$ and there is a direct arc from $u_i$ to $u_j$. 
The existence of this arc implies that there is some value $1 \leq h \leq n$ such that $u'_{h, \ell} = u_i$ and $u_j \in C_{h, \ell}$. Since $u_i$ is internal, it cannot be equal be to $u_h$ because this would have implied that $u_h$ was rejected immediately by Algorithm~\ref{alg:analysis}, and is thus, not internal. Thus, $u_i = x_{h, \ell}$. Recall now that $C_{h, \ell} - u_h$ is equal to the set $X_\ell$ chosen by ${\ExchangeAlg}$ when it is executed with the element $u_h$ and the set $S_{h-1}$. Thus, the fact that $u_i = x_{h, \ell}$ and the way $x_{h, \ell}$ is chosen out of $X_\ell$ implies that whenever $u_j \neq u_h$ we have
\[
	f(u_i : S_{d(i) - 1})
	=
	f(u_i : S_{h-1})
	\leq
	f(u_j : S_{h-1})
	\leq
	f(u_j : S_{d(j)-1})
	\enspace,
\]
where the equality holds since $u'_{h, \ell} = u_i$ implies $d(i) = h$ and the last inequality holds since $f(u_j : S_{r - 1})$ is a non-decreasing function of $r$ when $r \geq j$ and the membership of $u_j$ in $C_{h, \ell}$ implies $j \leq h \leq d(j)$.

It remains to consider the case $u_j = u_h$. In this case, the fact that $u_j = u_h$ is accepted into the solution of Algorithm~\ref{alg:analysis} implies
\begin{align*}
	f(u_j : S_{d(j)-1})
	\geq{} &
	f(u_j : S_{j-1})
	=
	f(u_j \mid S_{j-1} \cap \{u_1, u_2,\dotsc, u_{j-1}\})
	=
	f(u_j \mid S_{j-1})\\
	={} &
	f(u_h \mid S_{h-1})
	\geq
	(1 + c) \cdot f(U_h : S_{h - 1})
	\geq
	f(U_h : S_{h - 1})\\
	\geq{} &
	f(x_{h, \ell} : S_{h-1})
	=
	f(u_i : S_{h-1})
	=
	f(u_i : S_{d(i) - 1})
	\enspace,
\end{align*}
where the first inequality holds since $d(j) \geq j$ by definition, the last equality holds since $u'_{h, \ell} = u_i$ implies $d(i) = h$ and the two last inequalities follow from the fact that the elements of $U_h \subseteq A$ do not belong to $Z$ by Observation~\ref{obs:set_properties}, which implies (again, by Observation~\ref{obs:set_properties}) that $f(u : S_{h-1}) \geq 0$ for every $u \in U_h$. This completes the proof of the lemma for the special case that $u_i \in A$ and there is a direct arc from $u_i$ to $u_j$.

Next, we prove that no arc of $G_\ell$ goes from an internal vertex to an external one. Assume this is not the case, and that there exists an arc $uv$ of $G_\ell$ from an internal vertex $u$ to an external vertex $v$. By definition, there must be a value $1 \leq h \leq n$ such that $v$ belongs to the cycle $C_{h, \ell}$ and $u'_{h, \ell} = u$. The fact that $u$ is an internal vertex implies that $u_h$ must have been accepted by Algorithm~\ref{alg:analysis} upon arrival becuase otherwise we would have gotten $u = u'_{h, \ell} = u_{h}$, which implies that $u$ is external, and thus, leads to a contradiction. Consequently, we get $C_{h, \ell} \subseteq A$ because every element of $C_{h, \ell}$ must either be $u_{h}$ or belong to $S_{h - 1}$. In particular, $v \in A$, which contradicts our assumption that $v$ is an external vertex.

We are now ready to prove the lemma for the case $u_i \in A$ (even when there is no direct arc in $G_\ell$ from $u_i$ to $u_j$). Consider some path $P$ from $u_i$ to $u_j$, and let us denote the vertices of this path by $u_{r_0}, u_{r_1}, \dotsc, u_{r_{|P|}}$. Since $u_i$ is an internal vertex of $G_\ell$ and we already proved that no arc of $G_\ell$ goes from an internal vertex to an external one, all the vertices of $P$ must be internal. Thus, by applying the special case of the lemma that we have already proved to every pair of adjacent vertices along the path $P$, we get that the expression $f(u_{r_k} : S_{d(r_k) - 1})$ is a non-decreasing function of $k$, and in particular,
\[
	f(u_i : S_{d(i) - 1})
	=
	f(u_{r_0} : S_{d(r_0) - 1})
	\leq
	f(u_{r_k} : S_{d(r_k) - 1})
	=
	f(u_j : S_{d(j) - 1})
	\enspace.
\]

It remains to prove the lemma for the case $u_i \not \in A$. Let $u_h$ denote the first vertex on some path from $u_i$ to $u_j$ in $G_\ell$. Since $u_i \not \in A$, we get that $u'_{i, \ell} = u_i$, which implies that the arcs of $G_\ell$ that were created in response to $u_i$ go from $u_i$ to the vertices of $C_{i, \ell} - u_i$. Since Observation~\ref{obs:graph_properties} gurantees that $u_i = u'_{i, \ell} \neq u'_{j, \ell}$ for every value $1 \leq j \leq n$ which is different from $i$, there cannot be any other arcs in $G_\ell$ leaving $u_i$, and thus, the existence of an arc from $u_i$ to $u_h$ implies $u_h \in C_{i, \ell} - u_i$. Recall now that $C_{i, \ell} - u_i$ is equal to the set $X_\ell$ in the execution of {\ExchangeAlg} corresponding to the element $u_i$ and the set $S_{i-1}$, and thus, by the definition of $x_{i, \ell}$, $f(x_{i, \ell} : S_{i-1}) \leq f(u_h : S_{i-1})$. Additionally, as an element of $C_{i, \ell} - u_i$, $u_h$ must be a member of $S_{i-1} \subseteq A$, and thus, by the part of the lemma we have already proved, we get $f(u_h : S_{d(h)-1}) \leq f(u_j : S_{d(j) - 1})$ because $u_j$ is reachable from $u_h$. Combining the two inequalities we have proved, we get
\[
	f(x_{i, \ell} : S_{i-1})
	\leq
	f(u_h : S_{i-1})
	\leq
	f(u_h : S_{d(h)-1})
	\leq
	f(u_j : S_{d(j) - 1})
	\enspace,
\]
where the second inequality holds since the fact that $u_h \in C_{i, \ell} - u_i \subseteq S_{i-1}$ implies $d(h) \geq i$.
\end{proof}

Consider now an arbitrary element $u_i \in T \setminus A$. Let us denote by $u_{r_\ell}$ the element $u_{r_\ell} = \psi_{T_\ell}(u_i)$ if it exists, and recall that this element is reachable from $u_i$ in $G_\ell$. Thus, the fact that $u_i$ is not in $A$ implies
\begin{align*}
	f(u_i \mid S_{i-1})
	\leq{} &
	(1 + c) \cdot \sum_{u \in U_i} f(u : S_{i-1})
	=
	(1 + c) \cdot \mspace{-18mu} \sum_{\substack{1 \leq \ell \leq m\\(S_{i-1} + u_i) \cap \cN_\ell \not \in \cI_\ell}} \mspace{-36mu} f(x_{i, \ell} : S_{i-1})\\
	\leq{} &
	(1 + c) \cdot \mspace{-18mu} \sum_{\substack{1 \leq \ell \leq m\\(S_{i-1} + u_i) \cap \cN_\ell \not \in \cI_\ell}} \mspace{-36mu} f(u_{r_\ell} : S_{d(r_\ell)})
	=
	(1 + c) \cdot \mspace{-18mu} \sum_{u_j \in \phi_T(u_i)} f(u_j : S_{d(j)})
	\enspace,
\end{align*}
where the inequality follows from Lemma~\ref{lem:step_internal} and the last equality holds since the values of $\ell$ for which $(S_{i-1} + u_i) \cap \cN_\ell \not \in \cI_\ell$ are exactly the values for which $u_i \in T_\ell$, and thus, they are all also exactly the values for which the multi-set $\phi_T(u_i)$ includes the value of $\psi_{T_\ell}(u_i)$. This completes the proof of the third property of $\phi_T$ that we need to prove.

Finally, consider an arbitrary element $u_i \in A \cap T$. Every element $u_j \in \phi_T(u_i)$ can be reached from $u_i$ in some graph $G_\ell$, and thus, by Lemma~\ref{lem:step_internal},
\begin{align*}
	f(u_i \mid S_{i - 1})
	={} &
	f(u_i \mid S_{i - 1} \cap \{u_1, u_2, \dotsc, u_{i-1}\})
	=
	f(u_i : S_{i-1})\\
	\leq{} &
	f(u_i : S_{d(i)-1})
	\leq
	f(u_j : S_{d(j)-1})
	\enspace,
\end{align*}
where the first inequality holds since $d(i) \geq i$ by definition and $f(u_i : S_{r-1})$ is a non-decreasing function of $r$ for $r \geq i$. Additionally, we observe that $u_i$, as an element of $T \cap A$, belongs to $T_\ell$ for every value $1 \leq \ell \leq m$ for which $u_i \in \cN_\ell$, and thus, the size of the multi-set $\phi_T(u_i)$ is equal to the number of ground sets out of $\cN_1, \cN_2, \dotsc, \cN_m$ that include $u_i$. Since we assume by Reduction~\ref{red:exact} that every element belongs to exactly $p$ out of these ground sets, we get that the multi-set $\phi_T(u_i)$ contains exactly $p$ elements (including repetitions), which completes the proof of Proposition~\ref{prop:mapping}.

\section{Experiment}

In this section, we evaluate the performance of our algorithm (\AlgSampling) on a video summarization task. 
We compare our algorithm with \AlgSeq \citep{GCGS14}\footnote{\url{https://github.com/pujols/Video-summarization}} and  \AlgLocal \citep{MJK17}.\footnote{\url{https://github.com/baharanm/non-mon-stream}}
 For our experiments, we use the Open Video Project (OVP) and the YouTube datasets, which have 50 and 39 videos, respectively \citep{DLDA11}. 

Determinantal point process (DPP) is a powerful method to capture diversity in datasets \citep{M75,kulesza2012determinantal}. 
Let $\cN = \{1,2, \cdots,  n\}$ be a ground set of $n$ items.
A DPP defines a probability distribution over all subsets of $\cN$, and a random variable $Y$ distributed according to this distribution obeys $\Pr[Y = S] = \frac{\det (L_S)}{\det(I + L)}$ for every set $S \subseteq \cN$,
where $L$ is a positive semidefinite kernel matrix, $L_S$  is the principal sub-matrix of $L$ indexed by $S$ and $I$ is the $n \times n$ identity matrix. 
The most divers subset of $\cN$ is the one with the maximum probability in this distribution. Unfortunately, finding this set is NP-hard \citep{KLQ95}, but the function $f(S) = \log \det (L_S)$ is a non-monotone submodular function \citep{kulesza2012determinantal}.

We follow the experimental setup of  \citep{GCGS14} for extracting frames from videos, finding a linear kernel matrix  $L$ and evaluating the quality of produced summaries based on their F-score.
\citet{GCGS14} define a sequential DPP, where each video sequence is partitioned into disjoint segments of equal sizes. For selecting a subset $S_t$ from each segment $t$ (i.e., set $\cP_t$), a DPP is defined on the union of the frames in this segment and the selected frames $S_{t-1}$  from the previous segment. 
Therefore, the conditional distribution of $S_t$ is given by, $\Pr[S_t|S_{t-1}] =  \frac{\det (L_{S_t \cup S_{t-1}})}{\det(I_t + L)},$ where $L$ is the kernel matrix define over $\cP_t \cup S_{t-1}$, and $I_t$ is a diagonal matrix of the same size as $\cP_t \cup S_{t-1} $ in which the elements corresponding to $S_{t-1}$ are zeros and
the elements corresponding to $\cP_t$ are $1$.
For the detailed explanation, please refer to \citep{GCGS14}.
In our experiments, we focus on maximizing the non-monotone submodular function $f(S_t) = \log \det (L_{S_t \cup S_{t-1}})$. 
We would like to point out that this function can take negative values, which is slightly different from the non-negativity condition we need for our theoretical guarantees.

We first compare the objective values  (F-scores) of the algorithms \AlgSampling and \AlgLocal for different segment sizes over YouTube and OVP datasets. In each experiment, the values are normalized to the F-score of summaries generated by \AlgSeq. 
In Figures~\ref{fig:objective-value}(a) and \ref{fig:objective-value}(b), we observe that both algorithms produce summaries with very high qualities. Figure~\ref{fig:video-60} shows the summary produced by our algorithm for OVP video number 60.
\citet{MJK17} showed that their algorithm (\AlgLocal) runs three orders of magnitude faster than \AlgSeq \citep{GCGS14}.
 In our experiments (see Figure~\ref{fig:objective-value}(c)), we observed that \AlgSampling is 40 and 50 times faster than \AlgLocal  for the YouTube and OVP datasets, respectively. 
 Note that  for different segment sizes the number of frames remains constant; therefore, the time complexities for both \AlgSampling and \AlgLocal do not change.

\begin{figure}[t!] 
	\centering
	\includegraphics[height=1.62in]{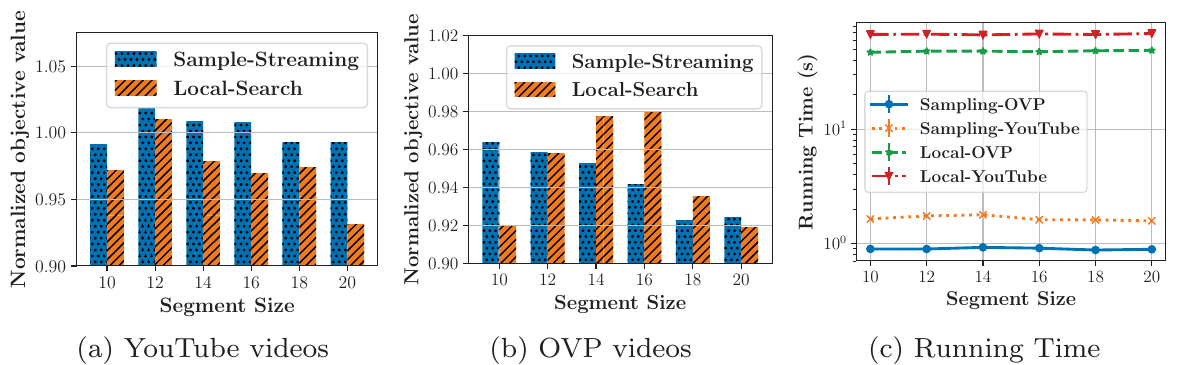}
	\caption{Comparing the normalized objective value and running time
of \AlgSampling and \AlgLocal for different segment sizes.}\label{fig:objective-value}
\end{figure}
\begin{figure}[t!]
	\centering

	\includegraphics[height=.43in]{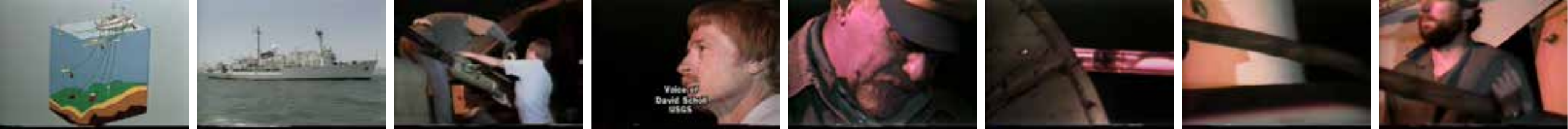}
	\caption{Summary generated by \AlgSampling for OVP video number 60.}
	\label{fig:video-60}
\end{figure}

\begin{figure}[t!]
	\centering
	\includegraphics[height=1.4in]{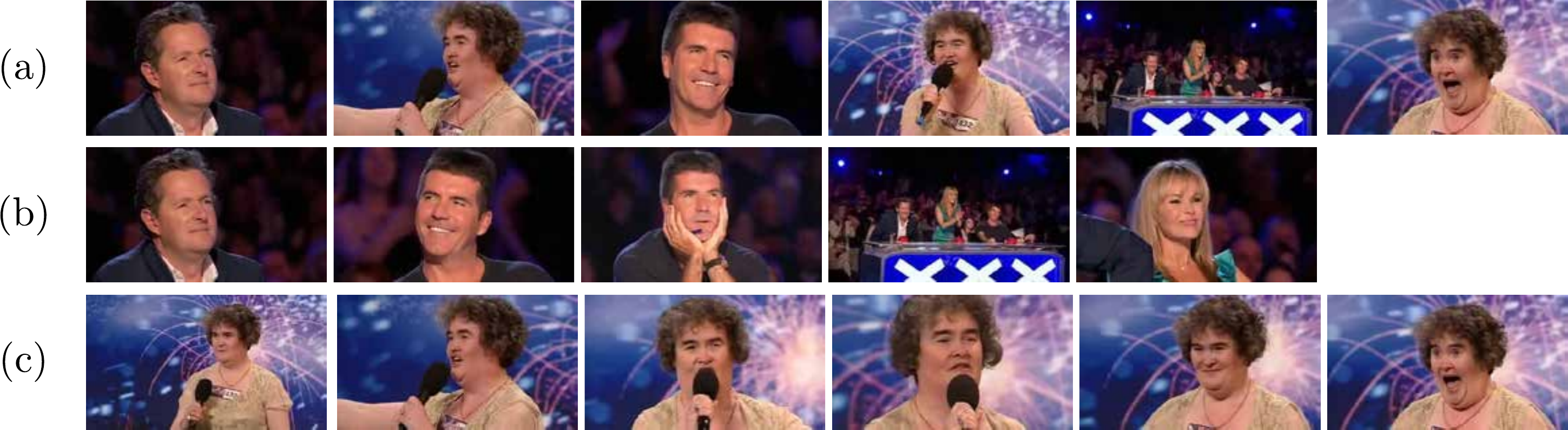}
	\caption{Summaries generated by \AlgSampling for YouTube video number 106: (a)  a $6$-matchoid constraint, (b) a $3$-matchoid constraint and (c) a partition matroid constraint.}
	\label{fig:constraint}
\end{figure}

In the last experiment, we study the effect of imposing different constraints on video summarization task for YouTube video number 106, which is a part of the America's Got Talent series.
In the first set of constraints, we consider 6 (for 6 different faces in the frames) partition matroids to limit the number of frames containing each face $i$, i.e., a $6$-matchoid constraint\footnote{Note that a frame may contain more than one face.}
$\mathcal{I} = \{ S \subseteq \cN : |S \cap \cN_i| \leq k_i\}$, where  $\cN_i \subseteq \cN$ is the set of frames containing face $i$ for $1 \leq i \leq 6$. For all the $i$ values, we set $k_i = 3$.
In this experiment, we use the same methods as described by \citet{MJK17} for face recognition. Figure~\ref{fig:constraint}(a) shows the summary produced for this task. The second set of constraints is a $3$-matchoid, where matroids limit the number of 
frames containing each one of the three judges. 
The summary for this constraint is shown in  Figure~\ref{fig:constraint}(b).
Finally, Figure~\ref{fig:constraint}(c) shows a summary with a single partition matroid constraint on the singer.

\section{Conclusion}  \label{sec:conlusion}
We developed a streaming algorithm for  submodular maximization by carefully subsampling elements of the data stream.
Our algorithm provides the best of three worlds: (i) the tightest approximation guarantees in various settings, including $p$-matchoid and matroid constraints for non-monotone submodular functions, (ii) minimum memory requirement, and (iii) fewest queries per element.
We also experimentally studied the effectiveness of our algorithm in a video summarization task.
\paragraph{Acknowledgements.} The work of Moran Feldman was supported in part by Israel Science Foundation (grant no. 1357/16). The work of Amin Karbasi was supported by DARPA Young Faculty Award (D16AP00046) and AFOSR Young Investigator Award (FA9550-18-1-0160). The work of Ehsan Kazemi was supported by Swiss National Science Foundation (Early Postdoc.Mobility) under grant
number 168574.

\bibliographystyle{plainnat}
\bibliography{./tex/SubmodularStreaming}

\begin{thebibliography}{40}
\providecommand{\natexlab}[1]{#1}
\providecommand{\url}[1]{\texttt{#1}}
\expandafter\ifx\csname urlstyle\endcsname\relax
  \providecommand{\doi}[1]{doi: #1}\else
  \providecommand{\doi}{doi: \begingroup \urlstyle{rm}\Url}\fi

\bibitem[Badanidiyuru et~al.(2014)Badanidiyuru, Mirzasoleiman, Karbasi, and
  Krause]{BMKK14}
Ashwinkumar Badanidiyuru, Baharan Mirzasoleiman, Amin Karbasi, and Andreas
  Krause.
\newblock Streaming submodular maximization: massive data summarization on the
  fly.
\newblock In \emph{KDD}, pages 671--680, 2014.

\bibitem[Bilmes and Bai(2017)]{BB17}
Jeffrey~A. Bilmes and Wenruo Bai.
\newblock Deep submodular functions.
\newblock \emph{CoRR}, abs/1701.08939, 2017.
\newblock URL \url{http://arxiv.org/abs/1701.08939}.

\bibitem[Buchbinder and Feldman(2016)]{BF16}
Niv Buchbinder and Moran Feldman.
\newblock Constrained submodular maximization via a non-symmetric technique.
\newblock \emph{CoRR}, abs/1611.03253, 2016.
\newblock URL \url{http://arxiv.org/abs/1611.03253}.

\bibitem[Buchbinder et~al.(2014)Buchbinder, Feldman, Naor, and
  Schwartz]{BFNS14}
Niv Buchbinder, Moran Feldman, Joseph Naor, and Roy Schwartz.
\newblock Submodular maximization with cardinality constraints.
\newblock In \emph{SODA}, pages 1433--1452, 2014.

\bibitem[Buchbinder et~al.(2015)Buchbinder, Feldman, and Schwartz]{BFS15}
Niv Buchbinder, Moran Feldman, and Roy Schwartz.
\newblock Online submodular maximization with preemption.
\newblock In \emph{SODA}, pages 1202--1216, 2015.

\bibitem[C{\u{a}}linescu et~al.(2011)C{\u{a}}linescu, Chekuri, P{\'{a}}l, and
  Vondr{\'{a}}k]{CCPV11}
Gruia C{\u{a}}linescu, Chandra Chekuri, Martin P{\'{a}}l, and Jan
  Vondr{\'{a}}k.
\newblock Maximizing a monotone submodular function subject to a matroid
  constraint.
\newblock \emph{{SIAM} J. Comput.}, 40\penalty0 (6):\penalty0 1740--1766, 2011.

\bibitem[Chakrabarti and Kale(2015)]{CK15}
Amit Chakrabarti and Sagar Kale.
\newblock Submodular maximization meets streaming: matchings, matroids, and
  more.
\newblock \emph{Math. Program.}, 154\penalty0 (1-2):\penalty0 225--247, 2015.

\bibitem[Chan et~al.(2017)Chan, Huang, Jiang, Kang, and Tang]{CHJKT17}
T.{-}H.~Hubert Chan, Zhiyi Huang, Shaofeng~H.{-}C. Jiang, Ning Kang, and
  Zhihao~Gavin Tang.
\newblock Online submodular maximization with free disposal: Randomization
  beats $\nicefrac{1}{4}$ for partition matroids.
\newblock In \emph{SODA}, pages 1204--1223, 2017.

\bibitem[Chekuri et~al.(2015)Chekuri, Gupta, and Quanrud]{CGQ15}
Chandra Chekuri, Shalmoli Gupta, and Kent Quanrud.
\newblock Streaming algorithms for submodular function maximization.
\newblock In \emph{ICALP}, pages 318--330, 2015.

\bibitem[Chen et~al.(2016)Chen, Nguyen, and Zhang]{CNZ16}
Jiecao Chen, Huy~L. Nguyen, and Qin Zhang.
\newblock Submodular maximization over sliding windows.
\newblock \emph{CoRR}, abs/1611.00129, 2016.
\newblock URL \url{http://arxiv.org/abs/1611.00129}.

\bibitem[De~Avila et~al.(2011)De~Avila, Lopes, da~Luz~Jr, and
  de~Albuquerque~Ara{\'u}jo]{DLDA11}
Sandra Eliza~Fontes De~Avila, Ana Paula~Brand{\~a}o Lopes, Antonio da~Luz~Jr,
  and Arnaldo de~Albuquerque~Ara{\'u}jo.
\newblock {VSUMM:} {A} mechanism designed to produce static video summaries and
  a novel evaluation method.
\newblock \emph{Pattern Recognition Letters}, 32\penalty0 (1):\penalty0 56--68,
  2011.

\bibitem[Ene and Nguyen(2016)]{EN16}
Alina Ene and Huy~L. Nguyen.
\newblock Constrained submodular maximization: Beyond 1/e.
\newblock In \emph{FOCS}, pages 248--257, 2016.

\bibitem[Epasto et~al.(2017)Epasto, Lattanzi, Vassilvitskii, and
  Zadimoghaddam]{ELVZ17}
Alessandro Epasto, Silvio Lattanzi, Sergei Vassilvitskii, and Morteza
  Zadimoghaddam.
\newblock Submodular optimization over sliding windows.
\newblock In \emph{WWW}, pages 421--430, 2017.

\bibitem[Feldman et~al.(2011{\natexlab{a}})Feldman, Naor, and Schwartz]{FNS11}
Moran Feldman, Joseph Naor, and Roy Schwartz.
\newblock A unified continuous greedy algorithm for submodular maximization.
\newblock In \emph{FOCS}, pages 570--579, 2011{\natexlab{a}}.

\bibitem[Feldman et~al.(2011{\natexlab{b}})Feldman, Naor, Schwartz, and
  Ward]{FNSW11}
Moran Feldman, Joseph Naor, Roy Schwartz, and Justin Ward.
\newblock Improved approximations for k-exchange systems - (extended abstract).
\newblock In \emph{ESA}, pages 784--798, 2011{\natexlab{b}}.

\bibitem[Feldman et~al.(2017)Feldman, Harshaw, and Karbasi]{FHK17}
Moran Feldman, Christopher Harshaw, and Amin Karbasi.
\newblock Greed is good: Near-optimal submodular maximization via greedy
  optimization.
\newblock In \emph{COLT}, pages 758--784, 2017.

\bibitem[Fisher et~al.(1978)Fisher, Nemhauser, and Wolsey]{FNW78}
M.~L. Fisher, G.~L. Nemhauser, and L.~A. Wolsey.
\newblock An analysis of approximations for maximizing submodular set functions
  -- {II}.
\newblock \emph{Mathematical Programming Study}, 8:\penalty0 73--87, 1978.

\bibitem[Gomes and Krause(2010)]{GK10}
Ryan Gomes and Andreas Krause.
\newblock Budgeted nonparametric learning from data streams.
\newblock In \emph{ICML}, pages 391--398, 2010.

\bibitem[Gong et~al.(2014)Gong, Chao, Grauman, and Sha]{GCGS14}
Boqing Gong, Wei{-}Lun Chao, Kristen Grauman, and Fei Sha.
\newblock Diverse sequential subset selection for supervised video
  summarization.
\newblock In \emph{NIPS}, pages 2069--2077, 2014.

\bibitem[Gupta et~al.(2010)Gupta, Roth, Schoenebeck, and Talwar]{GRST10}
Anupam Gupta, Aaron Roth, Grant Schoenebeck, and Kunal Talwar.
\newblock Constrained non-monotone submodular maximization: Offline and
  secretary algorithms.
\newblock In \emph{WINE}, pages 246--257, 2010.

\bibitem[Ko et~al.(1995)Ko, Lee, and Queyranne]{KLQ95}
Chun{-}Wa Ko, Jon Lee, and Maurice Queyranne.
\newblock An exact algorithm for maximum entropy sampling.
\newblock \emph{Operations Research}, 43\penalty0 (4):\penalty0 684--691, 1995.

\bibitem[Kulesza and Taskar(2012)]{kulesza2012determinantal}
Alex Kulesza and Ben Taskar.
\newblock Determinantal point processes for machine learning.
\newblock \emph{Foundations and Trends in Machine Learning}, 5\penalty0 (2--3),
  2012.

\bibitem[Lee et~al.(2010{\natexlab{a}})Lee, Mirrokni, Nagarajan, and
  Sviridenko]{LMNS10}
Jon Lee, Vahab~S. Mirrokni, Viswanath Nagarajan, and Maxim Sviridenko.
\newblock Maximizing nonmonotone submodular functions under matroid or knapsack
  constraints.
\newblock \emph{{SIAM} J. Discrete Math.}, 23\penalty0 (4):\penalty0
  2053--2078, 2010{\natexlab{a}}.

\bibitem[Lee et~al.(2010{\natexlab{b}})Lee, Sviridenko, and
  Vondr{\'{a}}k]{LSV10}
Jon Lee, Maxim Sviridenko, and Jan Vondr{\'{a}}k.
\newblock Submodular maximization over multiple matroids via generalized
  exchange properties.
\newblock \emph{Math. Oper. Res.}, 35\penalty0 (4):\penalty0 795--806,
  2010{\natexlab{b}}.

\bibitem[Libbrecht et~al.(2018)Libbrecht, Bilmes, and Noble]{LBN18}
Maxwell~W. Libbrecht, Jeffrey~A. Bilmes, and William~Stafford Noble.
\newblock Choosing non-redundant representative subsets of protein sequence
  data sets using submodular optimization.
\newblock \emph{Proteins: Structure, Function, and Bioinformatics}, 2018.
\newblock ISSN 1097-0134.

\bibitem[Lin and Bilmes(2011)]{LB11}
Hui Lin and Jeff~A. Bilmes.
\newblock A class of submodular functions for document summarization.
\newblock In \emph{HLT}, pages 510--520, 2011.

\bibitem[Macchi(1975)]{M75}
Odile Macchi.
\newblock The coincidence approach to stochastic point processes.
\newblock \emph{Advances in Applied Probability}, 7\penalty0 (1):\penalty0
  83--122, 1975.

\bibitem[Mirzasoleiman et~al.(2016{\natexlab{a}})Mirzasoleiman, Badanidiyuru,
  and Karbasi]{MBK16}
Baharan Mirzasoleiman, Ashwinkumar Badanidiyuru, and Amin Karbasi.
\newblock Fast constrained submodular maximization: Personalized data
  summarization.
\newblock In \emph{ICML}, pages 1358--1367, 2016{\natexlab{a}}.

\bibitem[Mirzasoleiman et~al.(2016{\natexlab{b}})Mirzasoleiman, Karbasi,
  Sarkar, and Krause]{MKSK16}
Baharan Mirzasoleiman, Amin Karbasi, Rik Sarkar, and Andreas Krause.
\newblock Distributed submodular maximization.
\newblock \emph{Journal of Machine Learning Research}, 17:\penalty0
  238:1--238:44, 2016{\natexlab{b}}.

\bibitem[Mirzasoleiman et~al.(2017{\natexlab{a}})Mirzasoleiman, Jegelka, and
  Krause]{MJK17}
Baharan Mirzasoleiman, Stefanie Jegelka, and Andreas Krause.
\newblock Streaming non-monotone submodular maximization: Personalized video
  summarization on the fly.
\newblock \emph{CoRR}, abs/1706.03583, 2017{\natexlab{a}}.
\newblock URL \url{http://arxiv.org/abs/1706.03583}.

\bibitem[Mirzasoleiman et~al.(2017{\natexlab{b}})Mirzasoleiman, Karbasi, and
  Krause]{MKK17}
Baharan Mirzasoleiman, Amin Karbasi, and Andreas Krause.
\newblock Deletion-robust submodular maximization: Data summarization with
  ``the right to be forgotten''.
\newblock In \emph{ICML}, pages 2449--2458, 2017{\natexlab{b}}.

\bibitem[Nemhauser and Wolsey(1978)]{NW78}
G.~L. Nemhauser and L.~A. Wolsey.
\newblock Best algorithms for approximating the maximum of a submodular set
  function.
\newblock \emph{Mathematics of Operations Research}, 3\penalty0 (3):\penalty0
  177--188, 1978.

\bibitem[Oveis~Gharan and Vondr{\'a}k(2011)]{GV11}
Shayan Oveis~Gharan and Jan Vondr{\'a}k.
\newblock Submodular maximization by simulated annealing.
\newblock In \emph{SODA}, pages 1098--1116, 2011.

\bibitem[Salehi et~al.(2017)Salehi, Karbasi, Scheinost, and Constable]{SKSC17}
Mehraveh Salehi, Amin Karbasi, Dustin Scheinost, and R.~Todd Constable.
\newblock A submodular approach to create individualized parcellations of the
  human brain.
\newblock In \emph{MICCAI}, pages 478--485, 2017.

\bibitem[Schrijver.(2003)]{S03}
A.~Schrijver.
\newblock \emph{Combinatorial Optimization: Polyhedra and Efficiency}.
\newblock Springer, 2003.

\bibitem[Tschiatschek et~al.(2014)Tschiatschek, Iyer, Wei, and Bilmes]{TIWB14}
Sebastian Tschiatschek, Rishabh~K. Iyer, Haochen Wei, and Jeff~A. Bilmes.
\newblock Learning mixtures of submodular functions for image collection
  summarization.
\newblock In \emph{NIPS}, pages 1413--1421, 2014.

\bibitem[Varadaraja(2011)]{V11}
Ashwinkumar~Badanidiyuru Varadaraja.
\newblock Buyback problem - approximate matroid intersection with cancellation
  costs.
\newblock In \emph{ICALP}, pages 379--390, 2011.

\bibitem[Vondr{\'{a}}k(2013)]{V13}
Jan Vondr{\'{a}}k.
\newblock Symmetry and approximability of submodular maximization problems.
\newblock \emph{{SIAM} J. Comput.}, 42\penalty0 (1):\penalty0 265--304, 2013.

\bibitem[Wang et~al.(2017)Wang, Li, and Tan]{WLT17}
Yanhao Wang, Yuchen Li, and Kian{-}Lee Tan.
\newblock Efficient streaming algorithms for submodular maximization with
  multi-knapsack constraints.
\newblock \emph{CoRR}, abs/1706.04764, 2017.
\newblock URL \url{http://arxiv.org/abs/1706.04764}.

\bibitem[Ward(2012)]{W12}
Justin Ward.
\newblock A (k+3)/2-approximation algorithm for monotone submodular k-set
  packing and general k-exchange systems.
\newblock In \emph{STACS}, pages 42--53, 2012.

\end{thebibliography}

\end{document}